\newtheorem{theorem}{Theorem}
\newtheorem{proposition}[theorem]{Proposition}
\newtheorem{corollary}{Corollary}
\newtheorem{definition}{Definition}
\DeclareRobustCommand{\qed}{%
  \ifmmode 
  \else \leavevmode\unskip\penalty9999 \hbox{}\nobreak\hfill
  \fi
  \quad\hbox{\qedsymbol}}
\newcommand{\openbox}{\leavevmode
  \hbox to.77778em{%
  \hfil\vrule
  \vbox to.675em{\hrule width.6em\vfil\hrule}%
  \vrule\hfil}}
\newcommand{\qedsymbol}{\openbox}
\newenvironment{proof}[1][\proofname]{\par
  \normalfont
  \topsep6\p@\@plus6\p@ \trivlist
  \item[\hskip\labelsep\itshape
    #1.]\ignorespaces
}{%
  \qed\endtrivlist
}
\newcommand{\proofname}{Proof}
\begin{document}

\begin{frontmatter}

\title{Efficient Trajectory Generation for Robotic Systems Constrained by Contact Forces} 


\thanks[footnoteinfo]{Some part of this paper will be presented at American Control Conference 2019 \cite{lee2018trajectory}. Corresponding author L. Sentis.}

\author[ME]{Jaemin Lee}\ead{jmlee87@utexas.edu},    
\author[ASE]{Efstathios Bakolas}\ead{bakolas@austin.utexas.edu},               
\author[ASE]{Luis Sentis}\ead{lsentis@austin.utexas.edu}  

\address[ME]{Department of Mechanical Engineering, The University of Texas at Austin, TX, 78712-1221, USA}  
\address[ASE]{Department of Aerospace Engineering and Engineering Mechanics, The University of Texas at Austin, TX 78712-1221, USA}             

\begin{keyword}                           
Robots; Reachability; Contact Force.               
\end{keyword}                             

\begin{abstract}                          
In this work, we propose a trajectory generation method for robotic systems with contact force constraint based on optimal control and reachability analysis. Normally, the dynamics and constraints of the contact-constrained robot are nonlinear and coupled to each other. Instead of linearizing the model and constraints, we directly solve the optimal control problem to obtain the feasible state trajectory and the control input of the system. A tractable optimal control problem is formulated which is addressed by dual approaches, which are sampling-based dynamic programming and rigorous reachability analysis. The sampling-based method and Partially Observable Markov Decision Process (POMDP) are used to break down the end-to-end trajectory generation problem via sample-wise optimization in terms of given conditions. The result generates sequential pairs of subregions to be passed to reach the final goal. The reachability analysis ensures that we will find at least one trajectory starting from a given initial state and going through a sequence of subregions. The distinctive contributions of our method are to enable handling the intricate contact constraint coupled with system's dynamics due to the reduction of computational complexity of the algorithm. We validate our method using extensive numerical simulations with a legged robot.
\end{abstract}

\end{frontmatter}

\section{Introduction}
This paper considers the optimal control of robotic systems with contact force constraints. Often, it is required that legged or humanoid robots maintain stable foot or body contacts while executing given tasks. In such cases, contact forces constrain and determine the robot's state reachability together with other state and input constraints. Therefore, we seek to devise control algorithms that can generate trajectories for contact-constrained robots via formal state reachability analysis. Often, control studies for robotics assume that task trajectories are predefined \cite{khatib1987unified, righetti2011inverse, sentis2005synthesis}, then attempt to find an instantaneously optimal solution to accomplish them. However the desired trajectories are frequently infeasible and it is not straight-forward to check the feasibility of trajectories under contact constraints a priori. Many motion planning and trajectory generation approaches for humanoid robots use very simple models of a robot such as considering center of mass dynamics under contact constraints \cite{kajita2003biped,stephens2010dynamic,liu2015trajectory}. However, those methods result on lower performance of the robots since they cannot capture the robot's kinematics or input constraints among other limitations. 

Optimal control is an alternative approach to solve the trajectory generation problem for contact-constrained robots. Recently, trajectory generation for the legged robots was formulated as a bi-level optimization problem and solved by an iterative Linear Quadratic Regulator (iLQR) \cite{carius2018trajectory}. However, the iLQR has still challenging issues to address generic nonlinear constraints without constraint softening. This could result on motion planers that violate important physical constraints of robots. To strictly consider the constraints and nonlinearity of the robot models, we should directly solve the optimal control problem via Nonlinear Programming (NLP) in the process of obtaining feasible trajectories. Although many NLP solvers, i.e. SNOPT \cite{gill2005snopt} and IPOPT \cite{wachter2006implementation}, are available, NLP has significantly challenging issues. One is high computational cost for obtaining the solution. Also, feasible initial conditions are necessary for NLP. In our work, we propose two complementary processes to resolve these problems: reachability analysis and sampling-based dynamic programming. 

The reachability problem consists of checking whether the state of the system can reach a specific state over a finite time horizon, starting from a given initial state. The field of robotics has often focused on configuration space reachability guided by the given tasks such as selecting the stance location of humanoid robots \cite{burget2015stance, yang2017efficient}. Although these methods are useful for kinematic feasibility, they are limited to address the requirements of dynamical systems considering various constraints such as joint velocity/torque limits and contact force constraints. To address this gap, we will employ optimal control on the nonlinear dynamical system with constraints. 

In optimal control, the reachability analysis has been often used for nonlinear systems \cite{althoff2013reachability,rungger2018accurate,scott2013bounds,althoff2008reachability,althoff2014reachability}, hybrid dynamical systems \cite{mitchell2001validating,habets2006reachability,mitchell2005time,maiga2016comprehensive}, and stochastic systems \cite{summers2013stochastic,lesser2014reachability,abate2008probabilistic}. We can categorize established reachability analysis methods for nonlinear systems into three groups: 1) solving Hamilton-Jacobi-Bellman PDE, 2) using linearization and mathematical approximation 3) using propagation and mappings of a set of reachable states. First, for low dimensional dynamical systems, the reachability analysis is often achieved via Hamilton-Jacobi-Bellman PDE \cite{asarin2000approximate,kariotoglou2013approximate}. For some systems, it is impossible to perform the reachability analysis by solving Hamilton-Jacobin-Bellman PDE. Instead, many approaches have been proposed to compute reachable sets exploiting mathematical techniques, optimization, inherent characteristics of systems, etc.     

Other than the methods using Hamilton-Jacobi-Bellman PDE, many methodologies have been proposed to obtain reachable sets of systems. The logarithmic norm of a type of system's Jacobian is utilized to obtain over-approximated reachable sets for nonlinear continuous-time systems \cite{maidens2015reachability} and that norm is utilized for simulation-based reachability analysis \cite{arcak2017simulation}. Another approach tries to do more accurate reachability analysis for uncertain nonlinear systems by using more conservative approximations \cite{althoff2008reachability, rungger2018accurate}. Also, for continuous-time piecewise affine systems, linear matrix inequalities (LMI) are employed to characterize the bounds of reachable regions \cite{hamadeh2008reachability}. Another class of  reachability analysis uses convex sets for approximation such as ellipsoid \cite{kurzhanski2000ellipsoidal,kim2008improved}, polytopes, zonotopes \cite{girard2005reachability}, and support functions \cite{girard2008efficient,le2010reachability}. Although those approximation-based approaches are capable of extending to nonlinear systems, they only consider convex sets and often ignore other constraints. These are challenges for extending the previous approaches to more sophisticated and complicated systems. Additionally, computational complexity exponentially increases with respect to the dimension of the state space and the time length for those methods.

Our problem considers a constrained nonlinear system with a constrained variable, e.g. a contact force, coupled with system dynamics. Since the contact force is time varying and our problem is also high dimensional, it is very difficult to do reachability analysis of our system via Hamilton-Jacobi-Bellman PDE. Linearization of dynamics and approximating reachable sets with convex sets is not applicable to our problem because reachable sets of constrained nonlinear systems may not be convex. Moreover, we want to be strictly compared to methods softening or linearizing constraints like iLQR. Thus, we devise a new method consisting of propagating system states and approximating the reachable set. NLP using optimal control utilizes the approximated reachable set. In order to address the increasing computational complexity, we propose various techniques which are described thereafter. And, we do so, in the context of robotic systems with contact force constraints, in a computational efficient way.  

Concretely, we incorporate a sampling-based approach, quadratic programming (QP), NLP, and approximation techniques such as propagation of boundary samples to solve our problem. More specifically, for dividing the end-to-end trajectory generation problem, we obtain the constrained-state set using a sampling-based approach and QP, then, reformulate the problem as a Partially Observable Markov Decision Process (POMDP) in the system's output space. An optimal Markov policy resulting from the dynamic programming (DP) provides a sequence of output subregions. The sequence of output subregions guides the path of output with avoiding unsafe output regions such as locations of obstacles in the output space. In the next step, we implement a rigorous reachability analysis between given pairs of subregions by propagating the states from the given initial state. In addition, we propose a method to approximate the reachable set using propagation of boundary states. The algorithmic efficiency of our method is one of the contributions of our work. 

This paper is organized as follows. Section \ref{sec2} defines our problem and the target class of system. A sampling-based algorithm for obtaining the set of constrained states in Section \ref{sec3} and a POMDP for obtaining an optimal Markov policy are described in Section \ref{sec4}. In Section \ref{sec5}, we propose an approach to obtaining the reachable set and analyzing the method in detail. Based on the result of rigorous reachability analysis in Section \ref{sec5}, an optimal controller is designed by implementing the NLP of Section \ref{sec6}. The proposed approach is validated by simulation of a robotic legged system with contact force constraint in Section \ref{sec6}.   

\section{Problem Formulation}
\label{sec2}

\subsection{Notation}
We denote the set of real $n$-dimensional vectors and the set of real $n\times m$ matrices by $\mathbb{R}^{n}$ and $\mathbb{R}^{n \times m}$, respectively. The sets of non-negative and non-positive real numbers are represented as $\mathbb{R}_{\leq 0}$ and $\mathbb{R}_{\geq 0}$, respectively. The set of natural numbers and the set of integer numbers are denoted by $\mathbb{N}$ and $\mathbb{Z}$, respectively. The set of positive definite $n\times n$ matrices and the set of positive semi-definite $n\times n$ matrices are denoted by $\mathbb{S}_{>0}^{n}$ and $\mathbb{S}_{\geq 0}^{n}$. When considering $z_1, z_2 \in \mathbb{N}$ with $z_2>z_1$, the discrete interval between $z_1$ and $z_2$ is defined as $[z_1, z_2]_{\mathbb{N}} \coloneqq \{z_1, z_1+1, \dots, z_2-1, z_2\}$. In case of real numbers $z_1, z_2 \in \mathbb{R}_{\geq 0}$, $[z_1, z_2]_{d}^{\Delta} \coloneqq \{ z_1, z_1+\Delta ,\dots, z_2-\Delta, z_2\}$ denotes a discrete interval with $\Delta$ being the increment. When $n$ real numbers $a_{1},\dots, a_{n}$ are consider, $\mathrm{Vec}[a_i]_{i=1}^{n} \in \mathbb{R}^{n}$ represents a vector whose $i$-th element is $a_i$. Given $n\times m$ real numbers $a_{11},\dots,a_{mn}$, a matrix whose $(i,j)$ element is $a_{ij}$ is denoted by $\mathrm{Mat}[a_{ij}]_{i,j=1}^{n,m} \in \mathbb{R}^{n\times m}$. Given a square matrix $\mathbf{A}\in \mathbb{R}^{n\times n}$, $\mathrm{tr}(\mathbf{A})$ denotes its trace. $\overline{\sigma}(\mathbf{A})$ and $\underline{\sigma}(\mathbf{A})$ represent the largest and smallest singular values of $\mathbf{A}$, respectively. Given matrices $\mathbf{A}_{i} \in \mathbb{R}^{n_i\times m}$ $i \in [1,z]_{\mathbb{N}}$, $\mathrm{Vertcat}(\mathbf{A}_{1}, \dots, \mathbf{A}_{z}) \in \mathbb{R}^{(n_q +\dots+n_z)\times m}$ indicates a block matrix constructed by vertically concatenating the matrices $\mathbf{A}_{i}$ $i \in [1,z]_{\mathbb{N}}$. Given a set of real vectors $\mathcal{A} \subseteq \mathbb{R}^{n}$, $\mathrm{card}(\mathcal{A})$ denotes its cardinality. When considering particular cases such that $\mathcal{A} \subset \mathbb{R}^n$ with $n \in [1,3]_\mathbb{N}$, $\mathrm{ghull}(\mathcal{A})$ and $\mathrm{gbd}(\mathcal{A})$ represent the general hull and the set of vectors closest the boundary of $\mathcal{A}$. $\mathbb{E}[.]$ represents the probabilistic expectation operator. 


\subsection{Nonlinear System Model}
We characterize the equation of motion for general robotic systems with contact forces and assuming rigid body linkages as follows:
\begin{equation}\label{eq:robot_dyn}
    \mathbf{M}(q)\ddot{q} + g(\dot{q}, q) = \mathbf{S}^{\top}u + \mathbf{J}_c^{\top}(q) F_{c} 
\end{equation}
where $q\in \mathbb{R}^{n_q}$, $\mathbf{M}(q)\in \mathbb{S}_{>0}^{n_q}$, $g(\dot{q},q)\in \mathbb{R}^{n_q}$, $\mathbf{S}\in \mathbb{R}^{n_q \times n_u}$, $u \in \mathbb{R}^{n_u}$, $\mathbf{J}_c(q)\in\mathbb{R}^{n_c \times n_q}$, and $F_{c}\in\mathbb{R}^{n_c}$ denote the joint variable, sum of Coriolis/centrifugal and gravitational forces, selection matrix for the actuation, input actuating joint torques, contact Jacobian matrix, and contact force, respectively. We can bring the differential equation (\ref{eq:robot_dyn}) into a state space form by defining the state $x \coloneqq [ x_1^{\top} x_2^{\top} ]^{\top}\in \mathbb{R}^{n_x}$ where $x_1 = q$ and $x_2 = \dot{q}$:
\begin{subequations}
\begin{align}
    \dot{x}(t) &= f_{\mathfrak{C}} (x(t), u(t), F_c(t)) \\
    &= f_x(x(t)) + f_u(x(t))u(t) + f_c(x(t)) F_c(t) \label{eq:state_space}\\
    y(t) &= f_{y}(x(t)) \\
    f_x(x) &\coloneqq \left[\begin{array}{c} x_2(t) \\ -\mathbf{M}^{-1}(x_1)g(x_1,x_2) \end{array} \right]\nonumber  \\
    f_u(x) &\coloneqq \left[\begin{array}{c} \mathbf{0}_{n_q \times n_u} \\ \mathbf{M}^{-1}(x_1)\mathbf{S}^{\top} \end{array} \right], f_{c}(x) \coloneqq \left[ \begin{array}{c} \mathbf{0}_{n_q \times n_c} \\ \mathbf{M}^{-1}(x_1)\mathbf{J}_c^{\top}(x_1) \end{array}\right]\nonumber 
\end{align}
\end{subequations}
where $f_{x}:\mathbb{R}^{n_x}\mapsto \mathbb{R}^{n_x}$, $f_{u}:\mathbb{R}^{n_x} \mapsto \mathbb{R}^{n_u}$, and $f_c:\mathbb{R}^{n_x}\mapsto \mathbb{R}^{n_c}$ are nonlinear functions of the state $x$. $f_y: \mathbb{R}^{n_x} \mapsto \mathbb{R}^{n_y}$ is a nonlinear output function that represents the desired tasks, i.e. forward kinematics of points of robot that we wish to control. We define a discrete time state space model of the contact-constrained robotic system (DTSCR) as the discrete counterpart of the state space model:
\begin{equation}\label{CTSCR}
\begin{split}
    x(t_{k+1}) &= f_{\mathfrak{D}}\left(x(t_{k}),u(t_{k}),F_c(t_{k}) \right) \\
    &= x(t_{k}) + \sum_{i=1}^{\infty} \frac{B_{i}(x(t_k),u(t_k), F_c(t_k))}{i!} (\Delta t)^{i} 
        \end{split}
\end{equation}
where $\Delta t$ denotes the sampling period for the discretization of the model. $f_{\mathfrak{D}}:\mathbb{R}^{n_x+n_u+n_c}\mapsto \mathbb{R}^{n_x}$ and $B_{i}(x,u,F_c) = \frac{\partial B_{i-1}(x,u,F_c)}{\partial x} B_{1}(x,u,F_c)$ and $B_{1}(x,u,F_c) = f_x(x)+f_u(x)u + f_c(x)F_c$.

\subsection{Constraints of the System}
We refer to $h_{e}$ and $h_{i}$ as the equality constraint function and the inequality constraint function, respectively, where we assume that $h_{e}(x) = 0$ and $h_{i}(x) \leq 0$. Three types of constraint functions are considered in this paper, i.e. functions describing state, the input, and the mixed state-input constraints denoted as $h_x(x)$, $h_u(u)$, and $h_{xu}(x,u)$, respectively. In practice, the state constraint is introduced to avoid violating joint position or velocity limits when controlling the robot. Input constraints are considered for describing the joint torque limit or the underactuation of floating robots. The mixed state-input constraint contains physically more intricate conditions such as mechanical power. Since the DTSCR is required to maintain stable contact while being controlled, we consider an additional constraint that is known as the contact wrench cone constrained to prevent slip and flip on contact surface. In particular, it is required that 
\begin{equation}
   h_{xc} (x,F_c) \leq 0, \quad h_{xc} (x,F_c) \coloneqq \mathbf{W}_c(x) F_c    
\end{equation}
where $\mathbf{W}_{c}(x)$ is a matrix describing the unilateral constraint using a polyhedral approximation of the friction cone of a surface \cite{caron2015stability}. The position at the contact point should be constant, which is identical to having null velocity on the robot's contact with respect to the surface contact. The zero velocity constraint corresponds to the state constraint. We will aim at controlling robots represented by the DTSCR model for desired output goals given the aforementioned constraints. 

 \begin{figure*}
\centering
\begin{minipage}[t]{\linewidth}
\includegraphics[width=\linewidth]{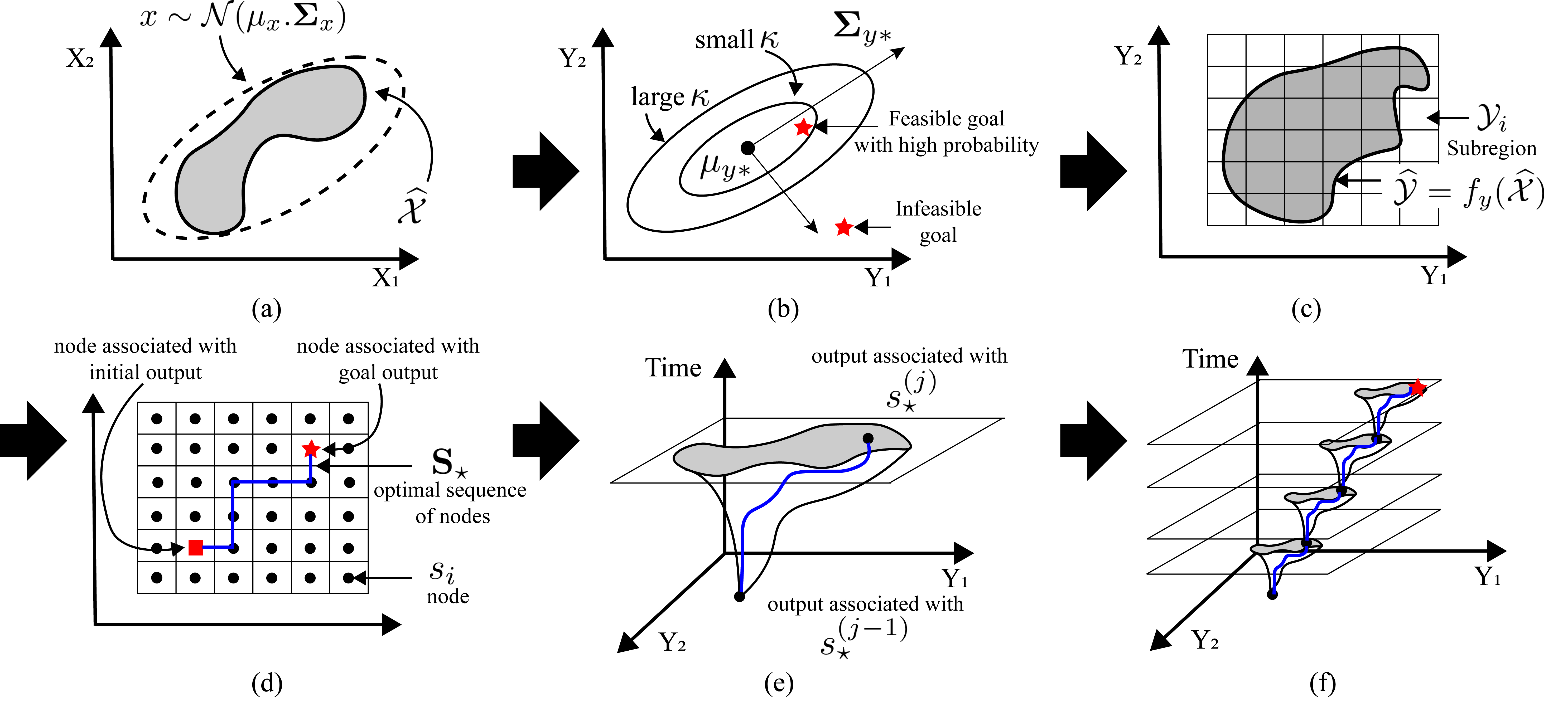}
\end{minipage}
\caption{Proposed method: (a) Generating random state vectors and obtaining a set of sample-wisely feasible states, Section \ref{sec3}.1, (b) Checking the feasibility goal output via output approximation, Section \ref{sec3}.2, (c) Defining output subregions and discrete node space, (d) Solving POMDP to obtain an optimal sequence of nodes, Section \ref{sec4}, (e) Obtaining reachable sets and optimizing trajectories between sequential nodes, Section \ref{sec5} and \ref{sec6} (f) Consecutively executing trajectory optimizations with reachable sets, Section \ref{sec5} and \ref{sec6}}
\label{Fig:sequence}
\end{figure*}

\subsection{Overall Scheme}
The proposed approach consists of three methods, which are sampling-based optimization, solving a POMDP, and reachability-based optimal control. The overall procedure is depicted in Fig. \ref{Fig:sequence}. The first part of the approach aims to obtain feasible states using sampling with respect to the given constraints as shown in Fig. \ref{Fig:sequence}(a), then to approximately check whether the goal output is feasible via output approximation as shown in Fig. \ref{Fig:sequence}(b). If the goal is achievable, we formulate and solve a POMDP process to create multiple tractable sub problems as shown in Fig. \ref{Fig:sequence}(c) and (d). Based on an optimal sequence of nodes, we obtain reachable sets and perform trajectory optimization between neighboring subregions associated with two sequential nodes, then we recursively iterate this process for all sequence of nodes until reaching the final goal output as shown in Fig \ref{Fig:sequence}(e) and (f). By connecting all trajectories, we obtain the entire trajectory in an efficient way. 

\section{Sample-Based Optimization}
\label{sec3}
In this section, we obtain a sequence of subregions in output space considering initial and goal states. Our method breaks down the end-to-end trajectory generation problem into multiple intermediate points sequentially connected. We start by creating random samples from a Gaussian distribution $x\sim\mathcal{N}(\mu_{x}, \mathbf{\Sigma}_x)$ where $\mu_x \in \mathbb{R}^{n_x}$ and $\mathbf{\Sigma}_x \in \mathbb{S}_{>0}^{n_x}$ are the mean and the covariance matrix, respectively, that is, $\mu_x \coloneqq \mathbb{E}[x]$ and $\mathbf{\Sigma}_x\coloneqq \mathbb{E}[(x-\mu_x)(x-\mu_x)^{\top}]$.

\subsection{Update for Random State Samples}
We consider a set of states fulfilling desired constraints. Since the Monte-Carlo method is very inefficient, we apply a least-square QP process to obtain feasible states. Let us consider $n_{e}^{c}$ state equality constraints and $n_{i}^{c}$ state inequality constraints. We define constraint functions as $h_{x,e[k_e]}(x)$ and $h_{x,i[k_i]}(x)$ where $k_e \in [1,n_{e}^c]_\mathbb{N}$ and $k_i \in [1,n_{i}^c]_\mathbb{N}$ are indices. Let us define Jacobian matrices and error vectors as follows:
\begin{subequations}
\begin{align}
    &\mathbf{J}_{e}(x) \coloneqq \mathrm{Vertcat}(J_{e[k]}(x):\forall k \in [1,n_{e}^c]_{\mathbb{N}}) \\
    &\mathbf{J}_{i}(x) \coloneqq \mathrm{Vertcat}(J_{i[k]}(x):\forall k \in [1,n_{i}^c]_{\mathbb{N}}) \\
    &\mathbf{e}_{e}(x) \coloneqq -\mathrm{Vertcat}(h_{x,e[k]}(x): \forall k \in [1,n_{e}^{c}]_{\mathbb{N}})\\
    &\mathbf{e}_{i}(x) \coloneqq  v_{\backslash i}^{int} - \mathrm{Vertcat}(h_{x,i[k]}(x): \forall k \in [1,n_{i}^{c}]_\mathbb{N})
\end{align}
\end{subequations}
where $J_{e[k]}(x) \coloneqq \frac{\partial h_{x,e[k]}}{\partial x}(x)$ and $J_{i[k]}(x)\coloneqq \frac{\partial h_{x,i[k]}}{\partial x}(x)$ denote the Jacoabians for equality and inequality constraint functions, respectively. $v_{\backslash i}^{int}$ is an arbitrary interior vector satisfying inequality constraints used as an attractor. The main idea for obtaining the state fulfilling constraints is to iteratively update the sampled state using the state increment $\Delta x$ until the constraints are satisfied. The state increment $\Delta x$ is obtained by using the QP method as follows:
\begin{equation} \label{opt_state}
\begin{split}
    \min_{\Delta x,w}& \quad \| w \|_{2}^{2} \\
    \textrm{s.t}& \quad \mathbf{J}_{e}(x)\Delta x \leq \mathbf{e}_e(x) + w \\
    &\quad \mathbf{J}_{i}(x)\Delta x \leq \mathbf{e}_{i}(x) 
\end{split}
\end{equation}
and we update the state $x$ using the optimal variable $\Delta x$. For numerical efficiency, we discard state samples if QP does not converge. Let us define a set $\mathcal{X}$ consisting of states fulfilling the constraints as follows:
\begin{equation} \label{state_stisfying}
\begin{split}
\mathcal{X} \coloneqq \{x \in \mathbb{R}^{n_x} :& \| h_{x,e[k_e]} (x) \| \leq \varepsilon, \: \forall k_{e} \in [1, n_{e}^{c}]_{\mathbb{N}}\: \\
& h_{x,i[k_i]}(x) \leq 0, \: \forall k_{i} \in [1, n_{i}^{c}]_{\mathbb{N}} \} 
\end{split}
\end{equation}
where $\varepsilon$ is a desired tolerance. 

For the next step, we take all elements of $\mathcal{X}$ and check whether they fulfill the input, mixed state-input, and contact force constraints. To achieve this, we formulate an optimization problem with a quadratic cost function as follows:
\begin{equation} \label{opt1}
\begin{split}
    \min _{F_c} \quad &F_{c}^{\top} \mathbf{Q}_{c} F_{c} + u^{\top} \mathbf{Q}_{u} u \\
    \mathrm{s.t.} \quad & x(t_{k+1}) = f_{\mathfrak{D}}(x(t_{k}),u,F_{c})\\
    & h_{u[k_u]}(u) \leq 0, \: \forall k_u \in [1, n_u^c]_\mathbb{N} \\
    & h_{xu[k_{xu}]}(x(t_k), u) \leq 0, \:  \forall k_{xu} \in [1, n_{xu}^c]_\mathbb{N}  \\
    & h_{xc}(x(t_k),F_{c})  \leq 0,\: x(t_{k}), x(t_{k+1}) \in \mathcal{X}
\end{split}
\end{equation}
where $\mathbf{Q}_{c} \in \mathbb{S}^{n_c}_{>0}$ and $\mathbf{Q}_u \in \mathbb{S}^{n_u}_{>0}$ are weighting matrices for the cost. By solving the optimization problem (\ref{opt1}) for all $x(t_{k}) \in \mathcal{X}$, we obtain a set of states, $\hat{\mathcal{X}}$, that fulfills all desired constraints.

\subsection{Output Space Approximation}
In this subsection, we check the feasibility of reaching the desired goal output. To do so, we approximate the output samples with a Gaussian distribution $y^{*} \sim \mathcal{N}(\mu_{y^{*}}, \mathbf{\Sigma}_{y^{*}})$ \cite{hendeby2007nonlinear}. The mean and covariance matrix obtained after neglecting higher order terms are 
\begin{subequations} \label{output_compute}
\begin{align}
    \mu_{y^{*}} &\coloneqq f_y(\mu_x) + \mathrm{Vec} \left[\mathrm{tr}(\mathbf{H}_{y,i}(\mu_x)\mathbf{\Sigma}_x)\right]_{i=1}^{n_y}\\
    \mathbf{\Sigma}_{y^{*}} &\coloneqq \mathbf{J}_{y}(\mu_x) \mathbf{\Sigma}_{x} \mathbf{J}_{y}^{\top}(\mu_x)  \nonumber\\
    &  + \frac{1}{2}\mathrm{Mat}\left[\mathrm{tr}(\mathbf{\Sigma}_{x}\mathbf{H}_{y,i}(\mu_x) \mathbf{\Sigma}_{x}\mathbf{H}_{y,j}(\mu_x))\right]_{i,j=1}^{n_y,n_y}
\end{align}
\end{subequations}
where $\mathbf{J}_{y}(\mu)$ and $\mathbf{H}_{y,i}(\mu)$ denote the Jacobian matrix of the output function $f_y(\mu)$ and the $2$nd derivative matrix of the output function $f_{y,i}(\mu)$, for the $i$-th element. In particular, 
\begin{equation}
\begin{split}
    \mathbf{J}_{y}( \mu) \coloneqq \frac{\partial f_y}{\partial x} (\mu),
    \mathbf{H}_{y,i}(\mu) \coloneqq \left[ \begin{array}{ccc}
         \frac{\partial^{2} f_{y,i} (\mu)}{\partial x_{1}^2}& \dots & \frac{\partial^{2} f_{y,i} (\mu)}{\partial x_{1} \partial x_{n_x}}  \\
         \vdots & \ddots & \vdots \\
         \frac{\partial^{2} f_{y,i} (\mu)}{\partial x_{n_x} \partial x_{1}} & \dots & \frac{\partial^{2} f_{y,i} (\mu)}{ \partial x_{n_x}^2}
    \end{array} \right]
\end{split}
\end{equation}
where $\mathbf{J}_{u}(\mu) \in \mathbb{R}^{n_y \times n_x}$ and $\mathbf{H}_{y,i}(\mu) \in \mathbb{R}^{n_x \times n_x}$ is a symmetric matrix. We construct a probabilistic ellipsoid in the output space to approximate whether an output sample $y^{*}$ is feasible. We define a set of outputs that lie inside an ellipsoid $\mathcal{E}_{\kappa}$ with
\begin{equation}\label{ellip}
    \mathcal{E}_{\kappa} \coloneqq \{ y \in \mathbb{R}^{n_y}: (y - \mu_{y^{*}})^{\top} \mathbf{\Sigma}_{y^{*}}^{-1} (y - \mu_{y^{*}}) \leq \kappa \}
\end{equation}
where $\kappa$ is a coefficient determined by the cumulative probability of the Chi-square distribution. For instance, $\kappa=5.991$ for $\mathrm{Pr}(y^{*}  \in \mathcal{E}_{\kappa}) = 0.95$ and $y^{*}\in \mathbb{R}^{2}$. Our method to check if a goal output $y_g$ is interior to $\mathcal{E}_\kappa$ is more efficient than using a Monte Carlo method, because we only need to compute $\mu_{y^{*}}$ and $\mathbf{\Sigma}_{y^{*}}$ using the mean and covariance matrix of the samples using (\ref{output_compute}). 

\section{POMDP for a Sequence of Subregions}
\label{sec4}
After checking that the desired output goal $y_{g}$ is located at the interior of the ellipsoid $\mathcal{E}_{\kappa}$ in ($\ref{ellip}$), the end-to-end trajectory generation problem is divided using intermediate points in the output space. To start the process, we define output subregions:
\begin{equation}
    \mathcal{Y}_{i} \coloneqq \left\{ y \in \mathbb{R}^{n_y}:  \| y - y_{c,i}  \|_{\infty}< \varepsilon_{y} \right\}
\end{equation}
where $y_{c,i} \in \mathbb{R}^{n_y}$ denotes the center of the output subregion $\mathcal{Y}_{i}$ and $i\in[1,m]_{\mathbb{N}}$ where $\bigcup_{i\in [1,m]_\mathbb{N}} \mathcal{Y}_{i} = \overline{\mathcal{Y}} \subset \mathbb{R}^{n_y}$. Also, we obtain a set of outputs $\hat{\mathcal{Y}} \coloneqq \hat{f}_{y}(\hat{\mathcal{X}}) = \{ y \in \mathbb{R}^{n_y}: y = f_y(x) , x \in \hat{\mathcal{X}} \}$ where $\hat{f}_y : \hat{\mathcal{X}} \	\rightrightarrows \hat{\mathcal{Y}}$. To formulate our problem as a POMDP, we define discrete nodes associated with the previous subregions as follows:
\begin{equation}
    s_{i} = \mathrm{node}( \mathcal{Y}_{i}) , \quad i \in [1, m]_{\mathbb{N}}
\end{equation}
where $\mathbf{S} \coloneqq \{ s_1, \dots, s_m \}$. Based on these nodes, we transform the problem to a POMDP. We will formulate the probability of observations using the sampled states.

\begin{definition}
(POMDP) Partially Observable Decision Making Process is defined as a tuple $\mathbf{P} = (\mathbf{S}, \mathbf{A}, \mathbf{O}, \mathbf{T}, \mathbf{Z} )$:
\begin{itemize}
    \item $\mathbf{S}$ is a finite set of nodes, $\mathbf{S} \coloneqq \{ s_1, \cdots, s_{m_s}\}$ 
    \item $\mathbf{A}$ is a finite set of actions, $\mathbf{A} \coloneqq \{a_1, \cdots, a_{m_a} \}$ 
    \item $\mathbf{O}$ is a finite set of observations, $\mathbf{O} \coloneqq \{ o_{1}, \cdots, o_{m_o}\}$
    \item $\mathbf{T}$ is the transition dynamics $\mathbf{T}(s',s,a)$ defining the transition from $s\in \mathbf{S}$ to $s'\in \mathbf{S}$ after taking an action $a \in \mathbf{A}$.
    \item $\mathbf{Z}$ is the observation $\mathbf{Z} (s,a,o)$ consisting of the probability of observing $o\in \mathbf{O}$ after taking an action $a\in \mathbf{A}$ from node $s\in S$.
\end{itemize}
\end{definition}

The problem concerning this section is on finding a sequence of feasible subregions towards an output goal using POMDP tools and analysis. 

\begin{definition}
(Markov Policy) A Markov policy $\mathbf{\Pi}$ is defined as a sequence: $\mathbf{\Pi} \coloneqq \{ a^{(1)}, \cdots,  a^{(n)}\}$. $a^{(j)} \in \mathbf{A}$, where $a^{(j)}: \mathbf{S} \rightarrow \mathbf{S}$ is a measurable map from a node to another one, $j\in [1,n]_{\mathbb{N}}$ .
\end{definition}

We convert the POMDP into a belief MDP. Belief $b[s_i]$ is defined with respect to discrete nodes $s_i \in \mathbf{S}$. Let suppose $b=b[s^{(j)}]$, $b' = b[s^{(j+1)}]$, and $a= a^{(j)}$ where $s^{(j)}$ represents the node for the $j$-th step of the POMDP. The belief transition function, $\Gamma(b,a,b')$, is equal to
\begin{subequations}
\begin{align}
    &\Gamma(b,a,b') = \sum_{o\in \mathbf{O}} \mathrm{Pr}(b'|b,a,o)\mathrm{Pr}(o|b,a) \\
    &\mathrm{Pr}(b'|b,a,o) = \left\{ \begin{array}{ll} 1, & \textrm{if belief update returns $b'$} \\ 0, & \textrm{otherwise} \end{array} \right. \\
    &\mathrm{Pr}(o|b,a) = \sum_{s'\in \mathbf{S}}\mathbf{Z}(s',a,o)\sum_{s \in \mathbf{S}} \mathbf{T}(s',s,a)b \textrm{.}
\end{align}
\end{subequations}
The key challenges of this POMDP are on defining useful observations and on finding their conditional probability. Let us consider that $\mathcal{Y} = \mathcal{Y}^{(j)}$ and $\mathcal{Y}' = \mathcal{Y}^{(j+1)}$ associated with the nodes $s^{(j)}$ and $s^{(j+1)}$. We propose to define observations as the set of feasible states after taking an action $a$, i.e.
\begin{equation}
\begin{split}
    \hat{O} \coloneqq &\{ z \in \mathbb{R}^{n_x}:  z  = x(t_k), \: \exists (F_{c}, u) \textrm{ in } (\ref{opt1}) \textrm{ with }\\
    & f_y(x(t_k)) \in \mathcal{Y} \cap \hat{\mathcal{Y}}, \: f_y(x(t_{k+1}) \in \mathcal{Y}' \cap \hat{\mathcal{Y}} \} \textrm{.}
\end{split}
\end{equation}
where $\mathcal{Y}$ is the subregion before taking the action $a$. If $z_1 \in \hat{O}$, it holds that there exists at least one sample connecting $f_y(z_1)$ to another output in the subregion $\mathcal{Y}'$ satisfying the constraints. Considering the above observations, we define the conditional probability as 
\begin{equation}
    \mathbf{Z}(s',a,o) \coloneqq \mathrm{Pr}(o|s',a)  = \mathrm{card}( \hat{O} )/\mathrm{card}( \mathcal{Y}' \cap \hat{\mathcal{Y}}).
\end{equation}
Let us focus on the reward and transition dynamics. As a heuristic, a higher number of feasible samples falling into a subregion implies a higher probability of reaching it. Therefore we define the reward
\begin{equation}
    \mathfrak{R}(s_i,a) \coloneqq K_{r} \mathrm{card}( \mathcal{Y}_{i} \cap \hat{\mathcal{Y}} )/\mathrm{card}(\hat{\mathcal{Y}}) + \eta_{i}
\end{equation}
where $\mathcal{Y}_i$ is the subregion associated with node $s_i \in \mathbf{S}$. $K_r \in \mathbb{R}_{>0}$ and $\eta_{i} \in \mathbb{R}$ are the gain and reward offset, respectively. We take $\eta_i$ to be large when $y_{g} \in \mathcal{Y}_{i}$. Also, to avoid unsafe output regions we set $\eta_{i}$ to large negative values.

\begin{definition}
Consider a node $s_{i} \in \mathbf{S}$ associated with an output subregion $\mathcal{Y}_{i}$ and a set  $\hat{\mathcal{Y}} = f_y (\hat{\mathcal{X}})$. Consider $\mathbf{\Sigma}_{\mathcal{Y}_{i}}$ being the covariance matrix for the set $\mathcal{Y}_{i} \cap \hat{\mathcal{Y}}$, that is, $\mathbf{\Sigma}_{\mathcal{Y}_{i}}=\mathbb{E}[(y - \mu_{y})(y - \mu_{y})^{\top}]$ and $\mu_{y} = \mathbb{E}[y]$ where $y \in \mathcal{Y}_{i} \cap \hat{\mathcal{Y}}$. A principle singular vector is defined as
\begin{equation}
    \mathcal{V}(s_{i}) = \mathrm{col}(\mathbf{V}_{\mathcal{Y}_{i}})_{k}, \quad \sigma_{k}(\mathbf{\Sigma}_{\mathcal{Y}_i}) = \overline{\sigma} (\mathbf{\Sigma}_{\mathcal{Y}_i}) \textrm{.}
\end{equation}
where $\mathbf{\Sigma}_{\mathcal{Y}_i} = \mathbf{V}_{\mathcal{Y}_{i}}^{\top} \mathbf{\Lambda}_{\mathcal{Y}_{i}} \mathbf{V}_{\mathcal{Y}_{i}}$, $\mathbf{\Lambda}_{\mathcal{Y}_{i}} = \mathrm{diag}(\sigma_{1}, \dots,\sigma_{n_y})$, and $\sigma_{k}$ denotes the singular value of $\mathbf{\Sigma}_{\mathcal{Y}_{i}}$.
\end{definition}

For defining the transition dynamics, let an action $a \in \mathbf{A}$ map state $s \in \mathbf{S}$ to $s' \in \mathbf{S}$. $d$ is a vector in our grid world defined as $d \coloneqq (y_{c}' - y_{c})$ where $y_c$ and $y_c'$ are the centers of subregions associated with $s$ and $s'$, respectively. We define the transition dynamics as
\begin{subequations}\label{transition_dyn}
\begin{align}
    \mathbf{T}(s',s,a) &\coloneqq  \left\{ \begin{array}{ll} \mathcal{T}(s', s, a) / \varpi,& \quad  \textrm{if }\varpi \neq 0, \\
     0, & \quad \textrm{else} \end{array}\right. \\ 
    \mathcal{T}(s',s, a) &\coloneqq \mathrm{max}\left\{ 0, \mathcal{V}^{\top}(s) \mathbf{a} \right\}
\end{align}    
\end{subequations}
where $\varpi = \sum_{a' \in \mathbf{A}} \mathcal{T}(s', s, a')$ denotes a normalization constant.

\begin{proposition}
Let $s=s_{i} \in \mathbf{S}$ and the corresponding subregion be $\mathcal{Y}_{i}$. If the output samples are uniformly distributed in $\mathcal{Y}_{i}$, then $\mathbf{T}(s', s, a) =0$. 
\end{proposition}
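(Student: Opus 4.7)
The plan is to exploit the isotropy of the uniform distribution over the axis-aligned hypercube $\mathcal{Y}_i = \{y : \|y - y_{c,i}\|_{\infty} < \varepsilon_y\}$: once the covariance matrix $\mathbf{\Sigma}_{\mathcal{Y}_i}$ is shown to be a scalar multiple of the identity, the principle singular vector $\mathcal{V}(s)$ degenerates, forcing $\mathcal{T}(s',s,a')=0$ uniformly in $a'$, which in turn forces $\mathbf{T}(s',s,a)=0$ through the normalization in (\ref{transition_dyn}).

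First I would compute the mean and covariance of the uniform law on $\mathcal{Y}_i$. Because the hypercube is a product of the intervals $(y_{c,i,j}-\varepsilon_y,\,y_{c,i,j}+\varepsilon_y)$ for $j\in [1,n_y]_{\mathbb{N}}$, the components of $y$ are mutually independent, each uniform with mean $y_{c,i,j}$ and variance $\varepsilon_y^2/3$. A direct integration yields $\mu_y = y_{c,i}$ and
\begin{equation}
    \mathbf{\Sigma}_{\mathcal{Y}_i} \;=\; \frac{\varepsilon_y^{2}}{3}\, \mathbf{I}_{n_y}.
\end{equation}

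Second, I would analyze the singular value decomposition of $\mathbf{\Sigma}_{\mathcal{Y}_i}$. Since $\mathbf{\Sigma}_{\mathcal{Y}_i}$ is a positive scalar multiple of the identity, all its singular values coincide, $\sigma_{1}=\cdots=\sigma_{n_y}=\varepsilon_y^2/3$, so $\overline{\sigma}(\mathbf{\Sigma}_{\mathcal{Y}_i})=\underline{\sigma}(\mathbf{\Sigma}_{\mathcal{Y}_i})$. The defining condition $\sigma_{k}(\mathbf{\Sigma}_{\mathcal{Y}_i})=\overline{\sigma}(\mathbf{\Sigma}_{\mathcal{Y}_i})$ therefore no longer singles out a unique index $k$, and no column of $\mathbf{V}_{\mathcal{Y}_i}$ can be identified as \emph{the} distinguished principle singular vector. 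Adopting the natural convention that $\mathcal{V}(s)$ vanishes in this fully degenerate isotropic case (equivalently, by the symmetries of the cube no coordinate-free choice of a preferred axis in $\mathcal{Y}_i$ exists), we obtain $\mathcal{V}^{\top}(s)\mathbf{a}=0$ for every action $a\in \mathbf{A}$.

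Third, I would substitute into (\ref{transition_dyn}): $\mathcal{V}^{\top}(s)\mathbf{a}=0$ for every $a'\in \mathbf{A}$ gives $\mathcal{T}(s',s,a')=\max\{0,0\}=0$, so the normalizing constant $\varpi=\sum_{a'\in\mathbf{A}}\mathcal{T}(s',s,a')$ equals zero, and the second branch of (\ref{transition_dyn}) directly yields $\mathbf{T}(s',s,a)=0$. The main obstacle is the justification of the convention $\mathcal{V}(s)=0$ whenever $\mathbf{\Sigma}_{\mathcal{Y}_i}$ is a scalar multiple of the identity; this is the only nontrivial step of the argument, the remainder being direct symmetry and substitution.
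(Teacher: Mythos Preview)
Your argument correctly isolates the heart of the matter: under the uniform law on the hypercube $\mathcal{Y}_i$ the covariance is $\mathbf{\Sigma}_{\mathcal{Y}_i}=(\varepsilon_y^{2}/3)\,\mathbf{I}_{n_y}$, all singular values coincide, and the principle singular vector is not uniquely determined. That is exactly the mechanism the paper uses.

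The gap is in how you resolve the degeneracy. You adopt the convention $\mathcal{V}(s)=0$, and you rightly flag this as the ``main obstacle,'' but it is not merely unjustified --- it is inconsistent with Definition~3. There $\mathcal{V}(s)$ is a column of the orthogonal factor $\mathbf{V}_{\mathcal{Y}_i}$ in the decomposition $\mathbf{\Sigma}_{\mathcal{Y}_i}=\mathbf{V}_{\mathcal{Y}_i}^{\top}\mathbf{\Lambda}_{\mathcal{Y}_i}\mathbf{V}_{\mathcal{Y}_i}$, so $\mathcal{V}(s)$ is always a unit vector and can never be the zero vector. The symmetry you invoke does show that no \emph{canonical} unit vector exists, but that does not license replacing a unit vector by zero.

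The paper resolves the degeneracy differently and stays within the definition. Because every unit vector in $\mathbb{R}^{n_y}$ is an eigenvector of a scalar multiple of the identity, any orthogonal $\mathbf{V}_{\mathcal{Y}_i}$ furnishes a valid SVD, and hence \emph{any} unit vector is an admissible choice for $\mathcal{V}(s)$. For the given action with associated direction $d=y_c'-y_c$, the paper simply selects $\mathcal{V}(s)=d^{\perp}$, a unit vector orthogonal to $d$. Then $\mathcal{V}^{\top}(s)\,\mathbf{a}=(d^{\perp})^{\top}d=0$, so $\mathcal{T}(s',s,a)=\max\{0,0\}=0$, which forces $\mathbf{T}(s',s,a)=0$ through (\ref{transition_dyn}) regardless of whether $\varpi$ vanishes. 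Your isotropy computation is precisely what makes this orthogonal choice available; replacing your ``$\mathcal{V}(s)=0$'' step with ``choose $\mathcal{V}(s)\perp d$'' closes the gap.
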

\begin{proof}
Since the samples are uniformly distributed, it is possible to select any unit vector in $\mathbb{R}^{n_y}$ as the PSV of $s_i$, that is, $\mathbf{R}_a\mathcal{V}(s_i)$ where $\mathbf{R}_a \in \mathrm{SO}(3)$ is a rotation matrix. If we select the rotation matrix $\mathbf{R}_a$ such that $d^{\perp} = \mathbf{R}_a \mathcal{V}(s_i)$, which is orthogonal to $\mathcal{V}(s_i)$, it follows that $\mathcal{T}(s',s,a) = \mathcal{V}^{\top}(s_{i}) d = (d^{\perp})^{\top} d = 0$ for all $a \in \mathbf{A}$.
\end{proof}

We now solve a finite-horizon belief MDP. The optimal policy, denoted by $\mathbf{\Pi}_{\star}$, is obtained by solving the Bellman equation as follows:
\begin{equation}
    \mathfrak{D}_{\star}(b) = \max_{a \in \mathbf{A}} [r(b,a) + \gamma \sum_{o \in \mathbf{O}}\textrm{Pr}(o|b,a) \mathfrak{D}_{\star}(\Gamma(b,a,o))] 
\end{equation}
where $r(b,a) = \sum_{s \in \mathbf{S}} b(s)\mathfrak{R}(s,a)$ denotes the belief reward. The result of the DP provides an optimal Markov policy which we transform to a sequence of nodes as
\begin{subequations}
\begin{align}
  \mathbf{\Pi}_{\star} &= \{a_{\star}^{(1)}, \cdots, a_{\star} ^{(n_{\pi})} \} \\
  \mathbf{S}_{\star} &= \{ s_{\star}^{(1)}, \cdots, s_{\star}^{(n_{\pi})} \}  
\end{align}
\end{subequations}
where $a_{\star}^{(i)}$ and $s_{\star}^{(i)}$ are $i$-th action and node in a sequence toward reaching the final output goal, respectively. The sequence of subregions in output space is
\begin{equation} \label{pair}
\begin{split}
    \mathbf{Y}_{\star} &= \{\mathcal{Y}_{\star}^{(1)}, \cdots, \mathcal{Y}_{\star}^{(n_{\pi})} \} \textrm{.}
\end{split}
\end{equation}
Based on the generated sequence of subregions, we will generate trajectories using reachability analysis connecting subregions in $\mathbf{Y}_{\star}$ in the next section. 

\section{Reachability Analysis}
\label{sec5}
In this section, we define discrete reachable sets and propose the way to obtain the reachable sets via optimizations. To overcome the computational complexity of propagation algorithm for the reachable sets, a method propagating the boundary samples is proposed and analyzed in the views of computational complexity.   

\subsection{Optimization-based Reachability Analysis}
We define reachable sets in continuous time domain as
\begin{equation} \label{continuous_set}
\begin{split}
    \mathcal{R}_{x}^{C}&(t,x_0) \coloneqq \{ x(t) : \exists u([t_0, t]), \exists F_{c}([t_0, t])\\
    & h_{x[k_x]}(x(\tau)) \leq 0 ,  \: \forall k_x \in [1,n_x^c]_{\mathbb{N}}\\
    & h_{xu[k_{xu}]}(x(\tau),u(\tau)) \leq 0,  \: \forall k_{xu} \in [1, n_{xu}^c]_{\mathbb{N}} \\
    & h_{xc}(x(\tau),F_c(\tau)) \leq 0, \: u(\tau) \in \mathcal{U} \\
    & \dot{x}(\tau) = f_{\mathcal{C}}(x(\tau), u(\tau), F_c(\tau)) \\
    &  x(t_0) = x_0 \in \mathcal{R}_x^{C}(t_0,x_0), \:  \tau \in [t_0, t]\}
\end{split}
\end{equation}
$\mathcal{R}_{x}^{C}(t_0, x_0)$ is equal to $\{ x_0 \}$ which means that the initial state fulfills all constraints.
\begin{definition}
Let $x_0 \in \mathcal{R}_x^{C}(t_0,x_0)$, be an initial state and $t \in [t_0, t_f]_{d}^{\Delta t}$ be an arbitrary time interval. We define a reachable set in discrete time domain as:
\begin{equation}\label{def:reachable_set}
\begin{split}
	\mathcal{R}_{x}^{D}& \left(t,x_{0} \right) \coloneqq \{ x(t) : \exists u([t_0, t]_{d}^{\Delta t}), \exists F_{c}([t_{0}, t]_{d}^{\Delta t}) \\
	& h_{x[k_x]}(x(\tau)) \leq 0 , \: \forall k_x \in [1,n_x^c]_{\mathbb{N}}\\
    & h_{xu[k_{xu}]}(x(\tau),u(\tau)) \leq 0, \: \forall k_{xu} \in [1, n_{xu}^c]_{\mathbb{N}} \\
    & h_{xc} (x(\tau),F_c(\tau))  \leq 0, \: u(\tau) \in \mathcal{U} \\
    & x(\tau + \Delta t) = f_{\mathfrak{D}}(x(\tau),u(\tau),F_c(\tau)) \\
    & x(t_{0}) = x_0 \in \mathcal{R}_{x}^{D}(t_0,x_0), \: \tau \in [ t_{0}, t]_{d}^{\Delta t} \}
\end{split}
\end{equation}
where $\Delta t > 0$ is the discretization step or sampling period for our discrete model. 
\end{definition}
We extend the reachable set defined above for the finite discrete time interval $[t_0, t_f]_{d}^{\Delta t}$ as $ \mathcal{R}_{x}^{D}([t_0,t_f]_{d}^{\Delta t},x_0) \coloneqq \bigcup_{t \in [t_0, t_f]_{d,\Delta t}} \mathcal{R}_{x}^{D}(t,x_0) $. For any $t_f< \infty$, the reachable set satisfies the following bound $\mathcal{R}_{x}^{D}([t_0,t_f]_{d}^{\Delta t}$, $ x_0) \subseteq \mathcal{R}_{x}^{D}([t_0,+\infty),x_0) \subseteq \mathcal{R}_{x}^{C}([t_0,+\infty),x_0) \subsetneq \mathcal{X}$.

Consider $x_0$ and $\mathcal{R}_{x}^{D}([t_0, t_{k}]_{d}^{\Delta t}, x_0)$. A random input is drawn from a Gaussian distribution $u\sim\mathcal{N}(\mu_u,\mathbf{\Sigma}_{u})$ at each instant of time with the input set $\mathcal{U}$ defined as the collection of inputs fulfilling input constraint. We define a QP to check for feasible contact forces, i.e.
\begin{equation}\label{opt2}
\begin{split}
    \min _{F_c, \Delta x} \quad &F_{c}^{\top} \mathbf{Q}_{c} F_{c} + \Delta x^{\top} \mathbf{Q}_{x} \Delta x \\ 
    \mathrm{s.t.}\quad& x(t_{k+1}) = f_{\mathfrak{D}}(x(t_{k}),u(t_k),F_{c}) \\
    & x(t_{k}) \in \mathcal{R}_{x}^{D}([t_0,t_{k}]_{d}^{\Delta t},x_0)\\
    & h_{x[k_x]}(x(t_{k+1})) \leq 0, \: \forall k_{x} \in [1, n_x^c]_{\mathbb{N}} \\
    & h_{xu[k_{xu}]}(x(t_{k}),u(t_k) \leq 0, \:\forall k_{xu} \in [1, n_{xu}^c]_\mathbb{N} \\
    & h_{xc} (x(t_k),F_c) \leq 0, \: u(t_{k}) \in \mathcal{U}\\
    & \Delta x = x(t_{k+1}) - x(t_{k})  
\end{split}
\end{equation}
The reachable set at the next time instance $t_{k+1}$ is obtained by collecting the $x(t_{k+1})$ updated by the QP solution $\Delta x$ in (\ref{opt2}). Then, we can extend the reachable set over $[t_0, t_{k+1}]^{\Delta t}_{d}$ such that $\mathcal{R}_{x}^{D}([t_0, t_{k+1}]_d^{\Delta t},x_0) \coloneqq \mathcal{R}_{x}^{D}([t_0,t_k]_d^{\Delta t},x_0) \cup \mathcal{R}_{x}^{D}(t_{k+1},x_0)$. Also, we define the set of outputs associated with reachable states such as $\mathcal{R}_{y}^{D}(t,x_0) = \hat{f}_y(\mathcal{R}_{x}^{D}(t,x_0))$ and $\mathcal{R}_{y}^{D}([t_0,t_f]_d^{\Delta t},x_0) = \hat{f}_y(\mathcal{R}_{x}^{D}([t_0,t_f]_{d}^{\Delta t},x_0))$.

\begin{theorem} \label{thm1}
Suppose that $\mathcal{R}_{x}^{D} (t_0,x_0)$ is a non-empty set and the desired output, $y_d$, is given. Let us assume that the set, $\mathcal{R}_{y}^{D}([t_0,t_f]_d^{\Delta t}, x_0)$, is compact, connected, and
\begin{equation}
    y_{d} \in \mathcal{R}_{y}^{D}([t_0,t_f]_d^{\Delta t}, x_0) \textrm{.}
\end{equation}
At least, one state trajectory $\mathbf{\Psi} \coloneqq \{ \xi(t_0), \xi(t_1), \dots, \xi(\tau) \}$ exists such that $f_y(\xi(\tau)) = y_d$ where $\tau \leq t_f$.
\end{theorem}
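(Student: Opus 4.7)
The plan is essentially to unpack the definition of the discrete-time reachable set and show that the hypothesis $y_d \in \mathcal{R}_{y}^{D}([t_0, t_f]_d^{\Delta t}, x_0)$ already encodes, by construction, the existence of the trajectory we need. So the proof will be definitional rather than analytical, and the compactness/connectedness hypotheses will mostly serve to make the numerical reachability procedure meaningful; they are not strictly required for the existence claim.

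First I would use the identity $\mathcal{R}_{y}^{D}([t_0, t_f]_d^{\Delta t}, x_0) = \bigcup_{t \in [t_0, t_f]_d^{\Delta t}} \mathcal{R}_{y}^{D}(t, x_0)$ (which follows directly from the corresponding identity for the state reachable set together with $\mathcal{R}_{y}^{D}(t,x_0) = \hat{f}_y(\mathcal{R}_{x}^{D}(t,x_0))$). Since $y_d$ lies in this union, there exists at least one discrete time $\tau \in [t_0, t_f]_d^{\Delta t}$ with $y_d \in \mathcal{R}_{y}^{D}(\tau, x_0)$. Applying $\mathcal{R}_{y}^{D}(\tau, x_0) = \hat{f}_y(\mathcal{R}_{x}^{D}(\tau, x_0))$ then produces a state $\xi(\tau) \in \mathcal{R}_{x}^{D}(\tau, x_0)$ satisfying $f_y(\xi(\tau)) = y_d$.

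Second, I would invoke the definition of $\mathcal{R}_{x}^{D}(\tau, x_0)$ in (\ref{def:reachable_set}). Membership of $\xi(\tau)$ in this set is, by definition, equivalent to the existence of input sequences $u([t_0, \tau]_d^{\Delta t})$ and contact force sequences $F_c([t_0, \tau]_d^{\Delta t})$ that, when iterated through the discrete dynamics $\xi(t_{k+1}) = f_{\mathfrak{D}}(\xi(t_k), u(t_k), F_c(t_k))$ starting from $\xi(t_0) = x_0$, produce at each step a state satisfying the state, mixed state-input, and contact wrench cone constraints, with $u(t_k) \in \mathcal{U}$, and terminate at $\xi(\tau)$. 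Collecting the intermediate states yields the sequence $\mathbf{\Psi} = \{\xi(t_0), \xi(t_1), \dots, \xi(\tau)\}$ with $\tau \leq t_f$ and $f_y(\xi(\tau)) = y_d$, which is precisely the trajectory claimed by the theorem.

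The only mildly subtle point, and what I would flag as the main obstacle, is justifying the union identity $\mathcal{R}_{y}^{D}([t_0, t_f]_d^{\Delta t}, x_0) = \bigcup_{t} \mathcal{R}_{y}^{D}(t, x_0)$ cleanly, because the set is defined only informally just after Definition~3 through the state-side union and the image under $\hat{f}_y$. This follows from the fact that image commutes with union: $\hat{f}_y\!\left(\bigcup_t \mathcal{R}_x^D(t,x_0)\right) = \bigcup_t \hat{f}_y(\mathcal{R}_x^D(t,x_0))$. The compactness and connectedness of $\mathcal{R}_{y}^{D}([t_0, t_f]_d^{\Delta t}, x_0)$ are not used in the existence argument itself; they appear to be needed only to make the optimization-based propagation in (\ref{opt2}) well-posed and to guarantee that the boundary-sample approximation discussed later in Section~\ref{sec5} actually captures the reachable set, so I would remark on this rather than use them in the chain of implications.
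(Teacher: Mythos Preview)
Your argument is correct and, in fact, more direct than the paper's own proof. The paper attempts to use the compactness and connectedness hypotheses: it argues that compactness of $\mathcal{R}_{y}^{D}$ together with continuity of $f_y$ forces $\mathcal{R}_{x}^{D}$ to be closed and $f_y^{-1}$ to be continuous, then that connectedness of $\mathcal{R}_{y}^{D}$ pulls back to connectedness of $\mathcal{R}_{x}^{D}$, and finally that connectedness of $\mathcal{R}_{x}^{D}$ guarantees a trajectory from $x_0$ to some $x_f$ with $f_y(x_f)=y_d$. Your approach instead bypasses topology entirely and reads the conclusion straight out of Definition~4: membership $\xi(\tau)\in\mathcal{R}_{x}^{D}(\tau,x_0)$ \emph{is} the statement that an admissible input/contact-force sequence driving $x_0$ to $\xi(\tau)$ exists, so the trajectory $\mathbf{\Psi}$ comes for free once $y_d$ lies in the output reachable set. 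The gain of your route is that it is self-contained and does not rely on the somewhat delicate steps in the paper's argument (invoking continuity of an inverse of a generally non-injective $f_y$, and inferring a dynamically feasible trajectory from mere topological connectedness of a state set). The paper's route, on the other hand, makes explicit use of the compactness/connectedness hypotheses, which---as you correctly remark---serve in the rest of Section~5 to justify the boundary-propagation approximation rather than the bare existence claim of the theorem.
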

\begin{proof}
Since $\mathcal{R}_{y}^{D}([t_0,t_f]_d^{\Delta t}, x_0)$ is compact and $f_y$ is continuous, $\mathcal{R}_{x}^{D}([t_0,t_f]_d^{\Delta t}, x_0)$ is closed and $f_y^{-1}$ is also continuous. Then, $\mathcal{R}_{x}^{D}([t_0,t_f]_d^{\Delta t}, x_0)$ is connected because $\mathcal{R}_{y}^{D}([t_0,t_f]_d^{\Delta t}, x_0)$ is connected and $f_y^{-1}$ is continuous. Therefore, there exists at least one trajectory connecting $x_0$ to $x_f$ satisfying $f_y(x_f) = y_d$ in $\mathcal{R}_{x}^{D}([t_0,t_f]_d^{\Delta t}, x_0)$.   
\end{proof}

\begin{corollary} \label{col1}
The set, $\mathcal{R}_{y}^{D}([t_0,t_k]_d^{\Delta t},x_0)$, is compact.
\end{corollary}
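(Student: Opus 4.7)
The plan is to reduce the compactness of $\mathcal{R}_{y}^{D}([t_0,t_k]_d^{\Delta t},x_0)$ to that of the underlying state-space reachable set via the identity $\mathcal{R}_{y}^{D}([t_0,t_k]_d^{\Delta t},x_0) = \hat{f}_y\bigl(\mathcal{R}_{x}^{D}([t_0,t_k]_d^{\Delta t},x_0)\bigr)$, together with the standard fact that the continuous image of a compact set is compact. Since $f_y$ is continuous (forward kinematics is smooth in the joint variables), the task reduces to showing that $\mathcal{R}_{x}^{D}([t_0,t_k]_d^{\Delta t},x_0)$ is compact.

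First, I would decompose the set along the discretization grid as $\mathcal{R}_{x}^{D}([t_0,t_k]_d^{\Delta t},x_0) = \bigcup_{i=0}^{k} \mathcal{R}_{x}^{D}(t_0+i\Delta t,x_0)$; since a finite union of compact sets is compact, it suffices to prove each per-time-instant set $\mathcal{R}_{x}^{D}(t_i,x_0)$ is compact. I would proceed by induction on $i$. The base case $\mathcal{R}_{x}^{D}(t_0,x_0)=\{x_0\}$ is trivial. For the inductive step, $\mathcal{R}_{x}^{D}(t_{i+1},x_0)$ is the image under the continuous discrete-time map $f_{\mathfrak{D}}$ of the set $\mathcal{R}_{x}^{D}(t_i,x_0)\times \mathcal{F}_i$, where $\mathcal{F}_i$ denotes the admissible pairs $(u,F_c)$ at time $t_i$ satisfying the constraints listed in (\ref{def:reachable_set}). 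By the inductive hypothesis, $\mathcal{R}_{x}^{D}(t_i,x_0)$ is compact, so compactness of $\mathcal{R}_{x}^{D}(t_{i+1},x_0)$ reduces to that of $\mathcal{F}_i$.

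For $\mathcal{F}_i$, closedness is immediate since each constraint $h_u(u)\le 0$, $h_{xu}(x,u)\le 0$, $h_{xc}(x,F_c)\le 0$, and $h_{x}(f_{\mathfrak{D}}(x,u,F_c))\le 0$ is a sub-level set of a continuous function, and $u$ is bounded by the input set $\mathcal{U}$ (torque limits). The main obstacle is bounding $F_c$, because the wrench-cone constraint alone defines an unbounded polyhedral cone; boundedness must be extracted from the coupling of $F_c$ to the next-step state constraint, which forces $x(t_{i+1})$ into the bounded admissible set $\mathcal{X}$. Since $F_c$ enters $f_{\mathfrak{D}}$ through the term $\mathbf{M}^{-1}(x_1)\mathbf{J}_c^{\top}(x_1)F_c$ with $\mathbf{M}^{-1}$ positive definite, this coupling constrains $F_c$ to a bounded region whenever $\mathbf{J}_c$ has full row rank (a standard assumption for well-posed contact configurations). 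Once $\mathcal{F}_i$ is shown compact, applying the continuous map $f_{\mathfrak{D}}$ closes the inductive step, and a final application of the continuous map $f_y$ yields the claim.
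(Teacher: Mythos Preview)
Your argument is essentially sound, but it takes a genuinely different route from the paper's own proof. The paper does not go through the state-space reachable set at all: it simply asserts that the general hull $\mathrm{ghull}(\mathcal{R}_{y}^{D}([t_0,t_k]_d^{\Delta t},x_0))$ is compact, and then invokes the fact that a closed subset of a compact set is compact to conclude. In other words, the paper's proof is a one-line containment argument in the output space, while yours is an inductive, constructive argument in the state space that is then pushed forward through $f_y$.

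What each approach buys: the paper's proof is short but leaves two things unargued---why the general hull is compact (this amounts to assuming boundedness of the output reachable set) and why $\mathcal{R}_{y}^{D}$ is itself closed. Your inductive route actually earns compactness from first principles: you trace it back to the singleton $\{x_0\}$, the bounded input set $\mathcal{U}$, and continuity of $f_{\mathfrak{D}}$ and $f_y$. Two small caveats on your write-up: first, the admissible set at step $i$ is not a Cartesian product $\mathcal{R}_{x}^{D}(t_i,x_0)\times\mathcal{F}_i$, since the constraints $h_{xu}$ and $h_{xc}$ couple $(u,F_c)$ to $x$; you should instead consider the set of admissible triples $(x,u,F_c)$ and show that \emph{this} set is compact. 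Second, your boundedness argument for $F_c$ hinges on $\mathbf{J}_c$ having full row rank and on the state constraints bounding $x(t_{i+1})$; both are reasonable standing assumptions here but are worth stating explicitly, since the wrench-cone inequality alone leaves $F_c$ unbounded.
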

\begin{proof}
Let us consider $\mathrm{ghull}(\mathcal{R}_{y}^{D}([t_0,t_k]_d^{\Delta t},x_0))$, being compact. By the Heine$-$Borel theorem, all closed subsets of a compact set are also compact. Since $\mathcal{R}_{y}^{D}([t_0,t_k]_d^{\Delta t},x_0) \subset \mathrm{ghull}(\mathcal{R}_{y}^{D}([t_0,t_k]_d^{\Delta t},x_0))$, the reachable set $\mathcal{R}_{y}^{D}([t_0,t_k]_d^{\Delta t},x_0)$ is compact.  
\end{proof}
\begin{corollary} \label{col2}
Suppose that $x_0 \in \mathcal{R}_{x}^{D}(t_0, x_0)$ and $f_y$ is continuous. Then, a set, $\mathcal{R}_{y}^{D}([t_0,t_k]_d^{\Delta t},x_0)$, is connected. 
\end{corollary}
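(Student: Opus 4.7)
The plan is to reduce the claim about $\mathcal{R}_{y}^{D}([t_0,t_k]_d^{\Delta t},x_0)$ to an analogous claim about the state reachable set and then invoke the standard topological fact that the continuous image of a connected set is connected. By definition of $\hat{f}_y$ we have $\mathcal{R}_{y}^{D}([t_0,t_k]_d^{\Delta t},x_0) = \hat{f}_y\bigl(\mathcal{R}_{x}^{D}([t_0,t_k]_d^{\Delta t},x_0)\bigr)$, and $f_y$ is continuous by hypothesis, so it suffices to establish that $\mathcal{R}_{x}^{D}([t_0,t_k]_d^{\Delta t},x_0)$ is connected. This is the only nontrivial piece of work.

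For the state reachable set, the natural route is a path-connectedness argument based point at $x_0$. By definition (\ref{def:reachable_set}), every $x(t_j)$ in the reachable set is the endpoint of some admissible sequence $\{x_0,x(t_1),\dots,x(t_j)\}$ with admissible inputs $u(t_i)$ and contact forces $F_c(t_i)$ linking consecutive nodes through $f_\mathfrak{D}$. Each such one-step transition is induced by the underlying continuous dynamics $f_\mathfrak{C}$ of (\ref{eq:state_space}) integrated over a sampling interval of length $\Delta t$, which produces a continuous arc in state space between $x(t_i)$ and $x(t_{i+1})$ that remains in the admissible state region under the stated constraints. Concatenating these arcs along the trajectory yields a continuous curve from $x_0$ to $x(t_j)$ lying in $\mathcal{R}_{x}^{D}([t_0,t_k]_d^{\Delta t},x_0)$. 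Thus the entire state reachable set is the union of continuous paths all sharing the common basepoint $x_0 \in \mathcal{R}_{x}^{D}(t_0,x_0)$, and the standard fact that a union of connected sets with a common point is connected gives the desired conclusion. Applying continuity of $f_y$ then transfers connectedness to $\mathcal{R}_{y}^{D}([t_0,t_k]_d^{\Delta t},x_0)$.

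The main obstacle is the tension between the discrete-time formulation and topological connectedness: a strict reading of the reachable set as a finite union of isolated sample points would never be connected in $\mathbb{R}^{n_x}$. The proof must therefore lean on the fact that $f_\mathfrak{D}$ is the time-$\Delta t$ flow of the continuous system $f_\mathfrak{C}$, so that the discrete samples are implicitly joined by continuous intersample arcs that also satisfy the constraints (at least for sufficiently small $\Delta t$, or under suitable regularity of the constraint functions). Once this interpretation is accepted, the argument is a clean two-step application of standard topology: build paths from $x_0$, then push the connected state set through the continuous $f_y$.
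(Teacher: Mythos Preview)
Your final step---reducing to connectedness of $\mathcal{R}_{x}^{D}([t_0,t_k]_d^{\Delta t},x_0)$ and then pushing forward through the continuous $f_y$---matches the paper exactly. The divergence is in how connectedness of the state reachable set is obtained, and here your argument has a genuine gap that you flag but do not close.

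The continuous-time arcs you build by integrating $f_\mathfrak{C}$ between successive samples are \emph{not} elements of $\mathcal{R}_{x}^{D}([t_0,t_k]_d^{\Delta t},x_0)$ as defined in (\ref{def:reachable_set}): that set contains only states at the discrete instants $t_0,\dots,t_k$, so the concatenated curve does not lie in it, and path-connectedness of the discrete set does not follow. (Incidentally, the set is not ``a finite union of isolated sample points'' either---since $u$ and $F_c$ range over continua, each time-slice $\mathcal{R}_x^D(t_j,x_0)$ is already a continuum---but this observation does not rescue the arc argument.) Saying ``once this interpretation is accepted'' is exactly where the proof is missing: accepting it replaces the discrete reachable set by a strictly larger continuous-time object.

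The paper avoids this entirely by arguing purely within the discrete sets, via an inductive no-separation argument. With $\mathcal{F}_1=\mathcal{R}_x^D([t_0,t_{k-1}]_d^{\Delta t},x_0)$, $\mathcal{F}_2=\mathcal{R}_x^D(t_k,x_0)$, and $\mathcal{F}_1'\subseteq\mathcal{F}_1$ the states admitting a feasible one-step successor, it forms $\mathcal{F}_3=\mathcal{F}_2\cup\mathcal{F}_1'$ and shows that any disjoint decomposition $\mathcal{H}_1\cup\mathcal{H}_2$ of $\mathcal{F}_1\cup\mathcal{F}_2$ must place both $\mathcal{F}_1$ and $\mathcal{F}_3$ in the same piece (through a common point of $\mathcal{F}_1'$), forcing $\mathcal{H}_2=\emptyset$. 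No intersample arcs are invoked. If you want to repair your approach without appealing to the flow, the natural route is to use continuity of $(u,F_c)\mapsto f_\mathfrak{D}(x,u,F_c)$ over a connected admissible control set to get connectedness of successive time-slices, and then link them through their overlap---which is essentially what the paper's $\mathcal{F}_1'$ device accomplishes.
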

\begin{proof}
Consider three sets: $\mathcal{F}_{1} = \mathcal{R}_{x}^{D}([t_{0},t_{k-1}]_d^{\Delta t}, x_0)$, $\mathcal{F}_{2} = \mathcal{R}_{x}^{D}(t_k, x_0)$, and $\mathcal{F}_{3} = \mathcal{F}_2 \cup \mathcal{F}_{1}'$, where $\mathcal{F}_{1}'$ is the collection of states $x\in \mathcal{F}_1$ producing the next feasible state via the optimization (27) with respect to $x \in \mathcal{F}_{1}$. Then, $\mathcal{R}_{x}^{D}([t_0, t_{k}]_d^{\Delta t}, x_0)=\mathcal{F}_1\cup\mathcal{F}_2 = \mathcal{F}_{2} \cup \mathcal{F}_{3}$. Let us consider arbitrary two sets $\mathcal{H}_1$ and $\mathcal{H}_2$ satisfying $\mathcal{R}_{x}^{D}([t_0, t_{k}]_d^{\Delta t}, x_0) = \mathcal{H}_1 \cup \mathcal{H}_2$ with $\mathcal{H}_1 \cap \mathcal{H}_2 = \emptyset$. Let $x_h \in F_{1}'$ and suppose $x_h \in \mathcal{H}_{1}$. Then, $\mathcal{H}_1 \cap \mathcal{F}_{1} \neq \emptyset$ and $\mathcal{H}_1 \cap \mathcal{F}_{3} \neq \emptyset$. This implies that $\mathcal{F}_{1} \subseteq \mathcal{H}_1$ and $\mathcal{F}_{3} \subseteq \mathcal{H}_1$, hence, $\mathcal{H}_{2} = \emptyset$. This proves that $\mathcal{R}_{x}^{D}([t_0,t_k]_d^{\Delta t}, x_0)$ is connected. Since the mapping $f_y$ is continuous, we also conclude that the set $\mathcal{R}_{y}^{D}([t_0, t_k]_d^{\Delta t}, x_0)$ is connected. 
\end{proof}

\subsection{Propagation of Boundary States}
The basic algorithm for reachability analysis suffers from exponential complexity with respect to the number of time steps. Although the previous POMDP contributes to reducing the time horizon to be checked for reachability analysis, full-state propagation would still result in heavy computational burden. In this section, we propose a method for reducing the computational complexity of the algorithm by only propagating selected states. This approach results in more conservative reachable sets. We implement the propagation of boundary states by solving (25) for only state samples $x(t_{k}) \in \mathcal{B}_{x}^{D}([t_0, t_k]_d^{\Delta t},x_0)$ such that 
\begin{equation}
\begin{split}
    \mathcal{B}_{x}^{D}([t_0,t_k]_d^{\Delta t},x_0) \coloneqq \{& x \in \mathbb{R}^{n_x} : x\in \mathcal{R}_{x}^{D}([t_0,t_k]_d^{\Delta t},x_0),\\
    & f_y(x) \in \mathrm{gbd}(\mathcal{R}_{y}^{D}([t_0,t_k]_d^{\Delta t} ,x_0)) \} \textrm{.}
\end{split}
\end{equation}
We then compute the reachable set by collecting the updated states and extend the time horizon. In this way, the complexity becomes linear with respect to the number of boundary samples, $\mathrm{card}(\overline{\mathcal{B}}^{D}([t_0,t_k]_d^{\Delta t},x_0))$. In order to replace full-state propagation with boundary-state propagation, we show that the set of reachable outputs is compact and connected. First, the set $\overline{\mathcal{R}}_{y}^{D}([t_0,t_k]_d^{\Delta t},x_0)$ is compact, because we are able to obtain the hull of the set as shown in Corollary \ref{col1}. Next, we prove the reachable set $\mathcal{R}_{y}^{D}$ is connected.
\begin{corollary}
    Suppose that $x_0 \in \mathcal{R}_x^{D}(t_0,x_0) \neq \emptyset$ and $f_y$ is continuous. Then, $\overline{\mathcal{R}}_{y}^{D}([t_0,t_k]_d^{\Delta t},x_0)$ is connected.
\end{corollary}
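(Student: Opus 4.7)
The plan is to mirror the inductive argument used in Corollary \ref{col2}, modifying only the part that invokes the predecessor relation. Since continuity of $f_y$ preserves connectedness, it will suffice to show that the state-space set $\overline{\mathcal{R}}_{x}^{D}([t_0,t_k]_d^{\Delta t},x_0)$ generated by the boundary-propagation procedure is connected; then $\overline{\mathcal{R}}_{y}^{D}([t_0,t_k]_d^{\Delta t},x_0) = f_y(\overline{\mathcal{R}}_{x}^{D}([t_0,t_k]_d^{\Delta t},x_0))$ is connected as the continuous image of a connected set.

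I would proceed by induction on the discrete index $k$. The base case is trivial: at $k = 0$ we have $\overline{\mathcal{R}}_{x}^{D}(t_0,x_0) = \{x_0\}$ by hypothesis, which is a singleton and hence connected. For the inductive step, suppose $\overline{\mathcal{R}}_{x}^{D}([t_0,t_{k-1}]_d^{\Delta t},x_0)$ is connected. Decompose
\begin{equation}
\overline{\mathcal{R}}_{x}^{D}([t_0,t_k]_d^{\Delta t},x_0) = \overline{\mathcal{R}}_{x}^{D}([t_0,t_{k-1}]_d^{\Delta t},x_0) \cup \overline{\mathcal{R}}_{x}^{D}(t_k,x_0),
\end{equation}
and recall that by construction every element of $\overline{\mathcal{R}}_{x}^{D}(t_k,x_0)$ is produced by applying the QP (\ref{opt2}) to some predecessor that lies in the boundary subset $\mathcal{B}_{x}^{D}([t_0,t_{k-1}]_d^{\Delta t},x_0) \subseteq \overline{\mathcal{R}}_{x}^{D}([t_0,t_{k-1}]_d^{\Delta t},x_0)$.

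Now I would reuse the disconnection argument from Corollary \ref{col2}, adapted by replacing the full predecessor set $\mathcal{F}_1'$ with its boundary counterpart. Define $\mathcal{F}_1 = \overline{\mathcal{R}}_{x}^{D}([t_0,t_{k-1}]_d^{\Delta t},x_0)$, $\mathcal{F}_2 = \overline{\mathcal{R}}_{x}^{D}(t_k,x_0)$, and $\mathcal{F}_3 = \mathcal{F}_2 \cup \mathcal{B}_{x}^{D}([t_0,t_{k-1}]_d^{\Delta t},x_0)$. Assume for contradiction a separation $\overline{\mathcal{R}}_{x}^{D}([t_0,t_k]_d^{\Delta t},x_0) = \mathcal{H}_1 \sqcup \mathcal{H}_2$ into two disjoint nonempty relatively open pieces. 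Pick any $x_b \in \mathcal{B}_{x}^{D}([t_0,t_{k-1}]_d^{\Delta t},x_0)$ with a successor $x_b^+ \in \mathcal{F}_2$; since $x_b \in \mathcal{F}_1$ and $x_b^+ \in \mathcal{F}_2$ both arise from the boundary propagation, they witness a nontrivial intersection of $\mathcal{F}_1$ and $\mathcal{F}_3$. Combining with the inductive connectedness of $\mathcal{F}_1$ forces $\mathcal{F}_1 \subseteq \mathcal{H}_1$ and $\mathcal{F}_3 \subseteq \mathcal{H}_1$, yielding $\mathcal{H}_2 = \emptyset$, a contradiction.

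The main obstacle is ensuring that the weakened predecessor relation (only boundary states, not all reachable states) still glues $\mathcal{F}_2$ to $\mathcal{F}_1$ inside a single connected component. This requires that $\mathcal{B}_{x}^{D}([t_0,t_{k-1}]_d^{\Delta t},x_0)$ is nonempty whenever $\mathcal{F}_2$ is nonempty, which follows from the definition of $\mathrm{gbd}$ applied to any nonempty set on the output side together with the fact that every new sample is by construction the successor of some propagated boundary sample. A minor subtlety is that Corollary \ref{col2} implicitly treats these discrete state collections as subsets of $\mathbb{R}^{n_x}$ with the subspace topology, and here too connectedness must be read in that ambient sense; once this convention is fixed, the induction closes, and applying continuity of $f_y$ delivers the claim.
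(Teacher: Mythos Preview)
Your proposal is correct and follows essentially the same approach as the paper: the paper's own proof simply states that the argument is similar to that of Corollary~\ref{col2} and omits the details, and you have carried out precisely that adaptation, replacing the full predecessor set $\mathcal{F}_1'$ by the boundary set $\mathcal{B}_{x}^{D}([t_0,t_{k-1}]_d^{\Delta t},x_0)$ and then invoking continuity of $f_y$. Your added remarks on nonemptiness of the boundary set and the topological convention for connectedness are not in the paper but are consistent refinements of the same template.
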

\begin{proof}
The proof is similar to that of Corollary 2 and therefore is omitted.
\end{proof}

\subsection{Computational Complexity Analysis}
\label{sec_complexity}
We analyze the computational complexity to compare the efficiency of the propagation of full states and that of boundary states. There exists many algorithms to obtain the concave hull from the set of data \cite{galton2006region,duckham2008efficient,moreira2007concave}. They have $\mathcal{O}(n^{3})$ or $\mathcal{O}(n\log n)$ time complexity with $n$ data in $2$-D space. General QPs are non-deterministic polynomial-time hard, which means the algorithms are more complex than the polynomial time complexity to be solved. In the case that the QP is convex, it is widely known that the time complexity of the QP is $\mathcal{O}(m^{3})$ where $m$ is the number of decision variables.

Based on the aforementioned discussion, we can compare the computational complexity of two cases: propagation of full states and propagation of boundary states. Let us consider $N_t$ steps over the time interval $[t_0,t_k]_d^{\Delta t}$ where $\Delta t = (t_k- t_0)/N_t$, and $N_u$ is the number of input samples. For each propagation method, the computational complexity can be represented as $\mathbf{C}_{f} \sim  \mathcal{O} (\sum_{i=1}^{N_t} N_u^{i} (n_c + n_x)^{3} )\approx \mathcal{O} \left( N_u^{N_t+1} (n_c+n_x)^{3} \right)$, and $\mathbf{C}_{b} \sim  \mathcal{O} (\sum_{i=1}^{N_t} N_b(n_c+n_x)^{3} + (iN_b)^{3}) \approx \mathcal{O}( N_tN_b(n_c+n_x)^3 + N_t^4N_b^3) $ 
where $\mathbf{C}_{f}$, $\mathbf{C}_b$, and $N_b$ denote the complexity of full state propagation, that of boundary state propagation, and the number of boundary samples. Normally, a set of boundary samples contains much smaller samples than a set of entire states, that is, $N_b \ll N_u$. The effect of the boundary sampling on computational complexity becomes significantly advantageous in terms of the number of time steps. We will show the comparison of the computational complexity using an example in the simulation section.   

\section{Optimal Control}
\label{sec6}
In this work, we describe the use of sequential optimal control for nonlinear programs without constraint softening. Instead of considering end-to-end trajectory generation, we focus on finding a trajectory connecting two subregions obtained by the POMDP process described earlier. By iterating this process for connecting subregions (\ref{pair}), the optimal control process is able to attain the desired output with reduced computational cost. 

 \begin{figure*}
\centering
\begin{minipage}[t]{\linewidth}
\includegraphics[width=\linewidth]{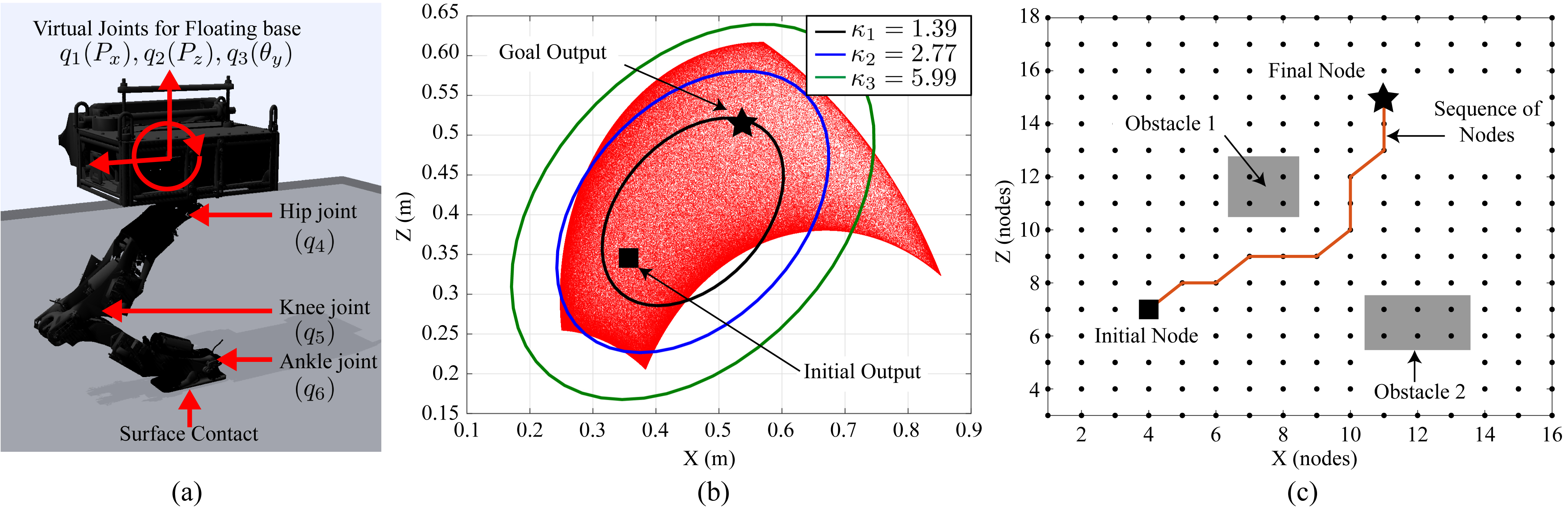}
\end{minipage}
\caption{(a) Legged robotic system with surface contact, consisting of three virtual joints and three actuated joints, (b) A set of output samples satisfying the constraints and the approximated ellipsoids with respect to $\kappa$ in (\ref{ellip}), (c) A sequence of nodes solved by POMDP. }
\label{Fig1}
\end{figure*}

\subsection{Nonlinear Programming}
In order to formulate the optimal control problem solved by NLP, a performance measure is defined in the discrete time and state space, that is,
\begin{subequations}
\begin{align*}
    &\mathcal{L}(\mathbf{U},N) \coloneqq  \sum_{k=0}^{N} (u^{\top}(t_k)\mathbf{Q}_{u}u(t_k) + F_c^{\top}(t_k)\mathbf{Q}_{c}F_{c}(t_k) \\
    &\qquad  \qquad \quad  + e_x^{\top}(t_k) \mathbf{Q}_{x} e_x(t_k) ) + e_{y}^{\top} (t_N) \mathbf{Q}_{y} e_{y}(t_N) \\
    & e_x(t_k) \coloneqq \xi(t_k) - x_0 , \quad    e_y(t_k) \coloneqq (f_y(\xi(t_k))-y_d)
\end{align*}
\end{subequations}
where $\mathbf{U} \coloneqq \{ u(t_0), u(t_1), \dots, u(t_N) \}$. In addition, $\mathbf{Q}_{x} \in \mathbb{S}_{>0}^{n_x}$ and $\mathbf{Q}_{y}\in \mathbb{S}_{>0}^{n_y}$ denote the weighting matrices for the state and output terms, respectively. $\xi(t) \in \mathbb{R}^{n_x}$ and $y_{d} \in \mathbb{R}^{n_y}$ denote the trajectory of the state and the desired goal of the output of the NLP, respectively. Based on the set of reachable states, we can formulate the NLP as follows:
\begin{equation} \label{NLP}
\begin{split}
    \min_{\mathbf{\Psi},\mathbf{U}} \quad &\mathcal{L}(\mathbf{U}, N) \\
    \mathrm{s.t.} \quad &  \xi(t_{k+1}) = f_{\mathfrak{D}}(\xi(t_k),u(t_k),F_c(t_k)) \\
    &  \xi(t_k) \in \mathcal{R}_{x}^{D}([t_0,t_k]_d^{\Delta t},x_0)\\
    &  \xi(t_{k+1}) \in \mathcal{R}_{x}^{D}([t_0,t_{k+1}]_d^{\Delta t},x_0) \\
    &  h_{x[k_x]}(\xi(t_{k+1})) \leq 0, \: \forall k_{x} \in [1, n_x^c]_{\mathbb{N}}  \\
    &  h_{xu[k_{xu}]}(\xi(t_{k}),u(t_k) \leq 0, \: \forall k_{xu} \in [1, n_{xu}^c]_\mathbb{N} \\
    &  h_{xc}(\xi(t_k), u(t_k)) \leq 0, \: u(t_k) \in \mathcal{U} .
\end{split}
\end{equation}
The proposed approach recursively executes the formulated NLP (\ref{NLP}) by changing the initial conditions, the constraints, and the goal output. Let us represent the optimization problem for connecting $\mathcal{Y}_{\star}^{(j)}$ to $\mathcal{Y}_{\star}^{(j+1)}$ as 
\begin{equation}
    (\mathbf{\Psi}^{(j)}, \mathbf{U}^{(j)}) = \mathrm{NLP} (x_0^{(j)}, y_{d}^{(j)}, t_0^{(j)}, N^{(j)})
\end{equation}
where $j \in [1, n_\pi -1]_{\mathbb{N}}$. We replace $t_0^{(j+1)}$, $x_0^{(j+1)}$ and $y_d^{(j+1)}$ with $t_N^{(j)}$, $\xi^{(j)}(t_N^{(j)})$ and $y_c^{(j+1)}$, respectively. $N^{(j+1)}$ is determined by reachability analysis using $t_0^{(j+1)}$, $x_0^{(j+1)}$, and $y_d^{(j+1)}$. In particular, we set $x_0^{(1)} = x_0$ and $y_d^{(n_{\pi} -1)} = y_{g}$. After performing the iterative NLP, we obtain the entire state and input trajectories to reach the goal output such that
\begin{subequations}
\begin{align}
    \mathbf{\Psi}^{\star} &\coloneqq \{ \mathbf{\Psi}^{(1)}, \dots, \mathbf{\Psi}^{(n_{\pi}-1)} \} \\
    \mathbf{U}^{\star} &\coloneqq \{ \mathbf{U}^{(1)},\dots, \mathbf{U}^{(n_{\pi}-1)} \} \textrm{.}
\end{align}
\end{subequations}
The solutions, $\mathbf{\Psi}^{\star}$ and $\mathbf{U}^{\star}$, enable the robotic system to reach the goal output state maintaining all constraints and contacts.  
 
\section{Numerical Simulations}

In this section, we validate the proposed approach by simulating a legged robot \textit{Draco}, which is developed for efficient and dynamic locomotion using liquid-cooled series elastic actuators \cite{kim2018investigations}. The dynamic simulation is implemented by DART \cite{lee2018dart}. We utilize two optimizers: Goldfarb for QP and IPOPT implementing a primal-dual interior point method \cite{wachter2006implementation}. The simulation is executed on a laptop with a Core i7-8650U CPU and $16.0$ GB RAM.       

\subsection{A Robotic System and Constraints}
A simulation model of \textit{Draco} consists of three virtual joints, i.e. position and orientation of its floating base ($q_1$, $q_2$, $q_3$) and three actuated joints, i.e. knee and ankle joints ($q_4$, $q_5$, $q_6$) as shown in Fig. \ref{Fig1} (a). For our simulations, the state and input constraints are defined as follows:
\begin{equation}
    \begin{split}
        &q_{j_1}^{\ominus} \leq q_{j_1} \leq q_{j_1}^{\oplus}, \textrm{  } \dot{q}_{j_1}^{\ominus} \leq \dot{q}_{j_1} \leq \dot{q}_{j_1}^{\oplus}, \textrm{  }  u_{j_1}^{\ominus} \leq u_{j_1} \leq u_{j_1}^{\oplus}\\
        &u_{j_2} = 0, \quad j_1 \in \{4,5,6\}, \quad j_2 \in \{1,2,3\} 
    \end{split}
\end{equation}
where superscripts $(.)^{\ominus}$ and $(.)^{\oplus}$ represent the lower and upper limits, respectively. The specific conditions are $q^{\ominus}_{j_4,j_5,j_6} = [ - 1.2, \:  0.5. \: -1.5]$, $q^{\oplus}_{j_4,j_5,j_6} = [ -0.2, \: 2.6, \: -0.5]$, $u^{\ominus}_{j_4,j_5,j_6} = [ -1000, \: -1000, \: -1000]$, and $u^{\oplus}_{j_4,j_5,j_6} = [ 1000, \: 1000, \: 1000]$. The position, orientation, and velocity of the foot should satisfy the kinematic constraints for the contact. We consider a surface contact with rectangular support polygon on the foot so that the friction cone constraints can be characterized as 
\begin{equation} \label{constraints}
        |f_x| \leq k_{\mu} f_z,\quad  f_z >0, \quad | \tau _y | \leq d_x f_z
\end{equation}
where $d_x$ denotes the distance between the center of the polygon and the vertex in the local frame of the foot and $k_{\mu}$ represents the friction coefficient. $F_c \coloneqq [f_x, f_z, \tau_y ]^{\top}$ is the contact wrench, which is a resultant contact force at the center of the support polygon. Based on the inequality constraints of (\ref{constraints}) and the coordinate transformation from local frame on the foot to the global frame, we can represent the friction cone constraints in the form $\mathbf{W}_{local} \mathbf{R}_{c}(q) F_c = \mathbf{W}_c(q) F_c \leq 0$ where $\mathbf{W}_{local}$ is a coefficient matrix derived from (\ref{constraints}) and $\mathbf{R}_{c}(q)$ is a rotational matrix from global to foot frames. In $\mathbf{W}_{local}$, we set the friction coefficient $k_{\mu} = 0.4$. Considering all constraints, we will control the robot's motion while maintaining the contact.  
\begin{figure}
\centering
 \includegraphics[width=\columnwidth]{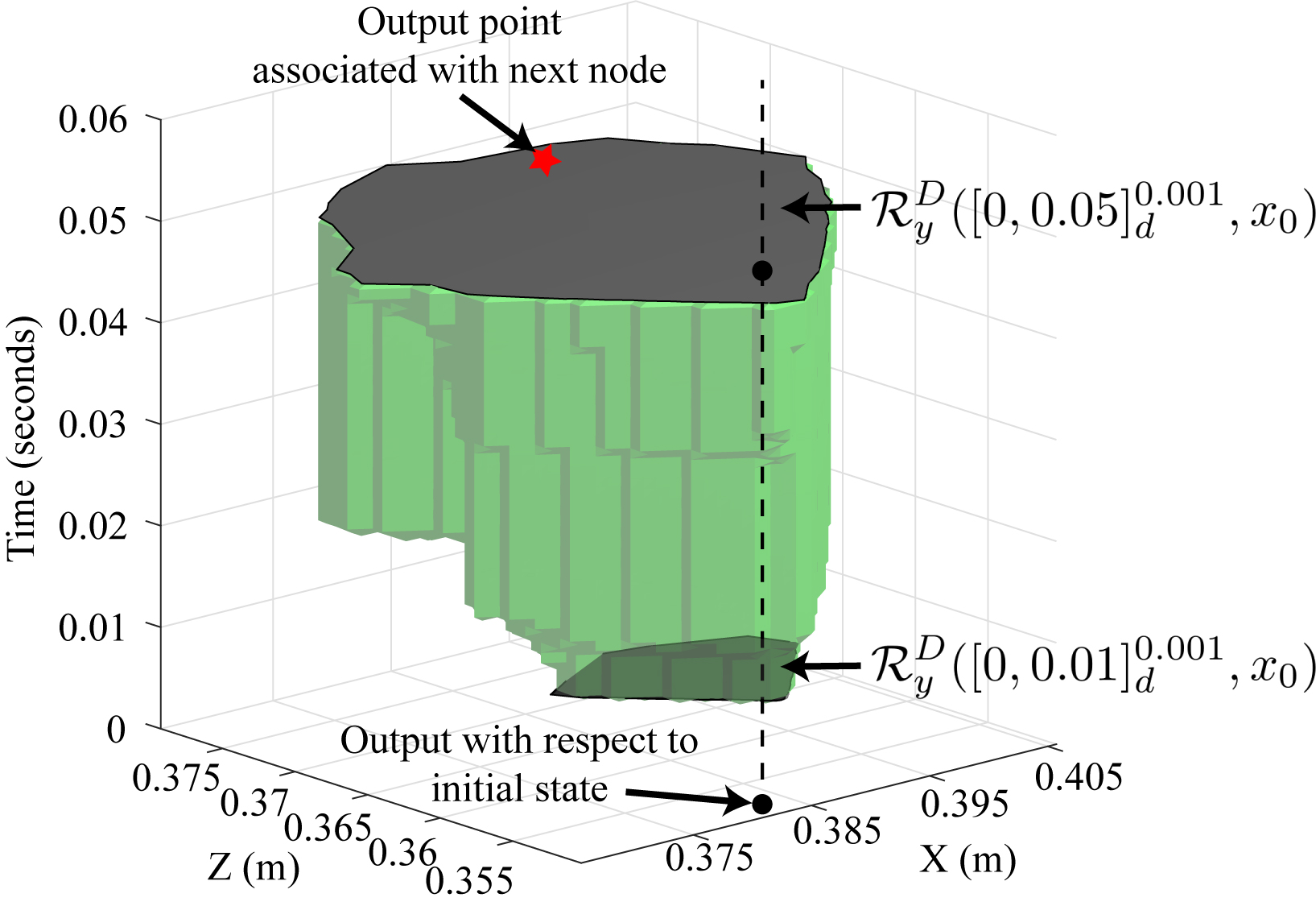}
\caption{Reachable sets over a finite time horizon $[0, 0.05]_{d}^{0.001}$ : The reachable sets are obtained by boundary-state propagation.}
\label{fig:sim_reach}
\end{figure}

\subsection{Setup for Simulation}
To start the reachability analysis, we generate $10^6$ state samples and gather the states fulfilling the constraints. The mean and covariance of the Gaussian distribution for sampling are 
\begin{equation}
    \begin{split}
        \mu_x &= [0.352, 0.384, -0.95, 2.2, -1.25, \mathbf{0}_{1\times 6}]^{\top} \\
        \mathbf{\Sigma}_{x} &= \left[\begin{array}{cc} \pi \mathbf{I}_{6\times 6} & \mathbf{0}_{6 \times 6} \\ \mathbf{O}_{6 \times 6}  & 2 \pi \mathbf{I}_{6 \times 6} \end{array} \right] \textrm{.}
    \end{split}
\end{equation}
We define a threshold for numerical convergence (\ref{state_stisfying}) with value $1.0 \times 10^{-7}$ and the maximum number of iterations is $1.0 \times 10^{6}$. After implementing the sampling-based approach described in Section \ref{sec3}, we obtain $3.47 \times 10^5$ states among $10^6$ state samples. For formulating the POMDP problem in $2$D space, we set $40$ nodes defined by $\mathbf{S} =\{s_i\}$ where $i\in [1,40]_{\mathbb{N}}$ and $8$ actions $\mathbf{A} = \{ a_{j} \}$ where $j\in [1,8]_{\mathbb{N}}$, and each action consists of moving up, down, right, left, up-right, up-left, down-right, and down-left in the grid world, respectively. We consider two static obstacles for the robot to avoid in the output space. The objective of our numerical simulation is to obtain an optimized trajectory to reach the goal output while avoiding the obstacles and fulfilling all constraints. In addition, we generate $1.0 \times 10^5$ input samples for propagating the states in the reachability analysis. 

\subsection{Simulation Results}
First, the results of our sample-based optimization is shown in Fig. \ref{Fig1}(b). The set with red dots contains the outputs associated with the states fulfilling the constraints given by the optimization process described in Section \ref{sec3}. As shown in Fig. \ref{Fig1}(b), both the initial $[ 0.384 ,\:  0.352 ]$ and goal $[ 0.51 ,\: 0.52]$ outputs are located at the interior of the feasible set. Then, we solve the POMDP problem to find a sequence of nodes, which result in $12$ of them, avoiding the obstacles as shown in Fig. \ref{Fig1}(c). After obtaining the sequence of nodes, we obtain the reachable sets as shown in Fig. \ref{fig:sim_reach}. The reachable set $\mathcal{R}_{y}^{D}([0,0.05]_{d}^{0.001},x_0)$ contains the desired output associated with the first node. Based on this result, we solve the NLP (\ref{NLP}) to find a trajectory to reach the desired output from the initial configuration.         

\begin{figure}
\centering
 \includegraphics[width=\columnwidth]{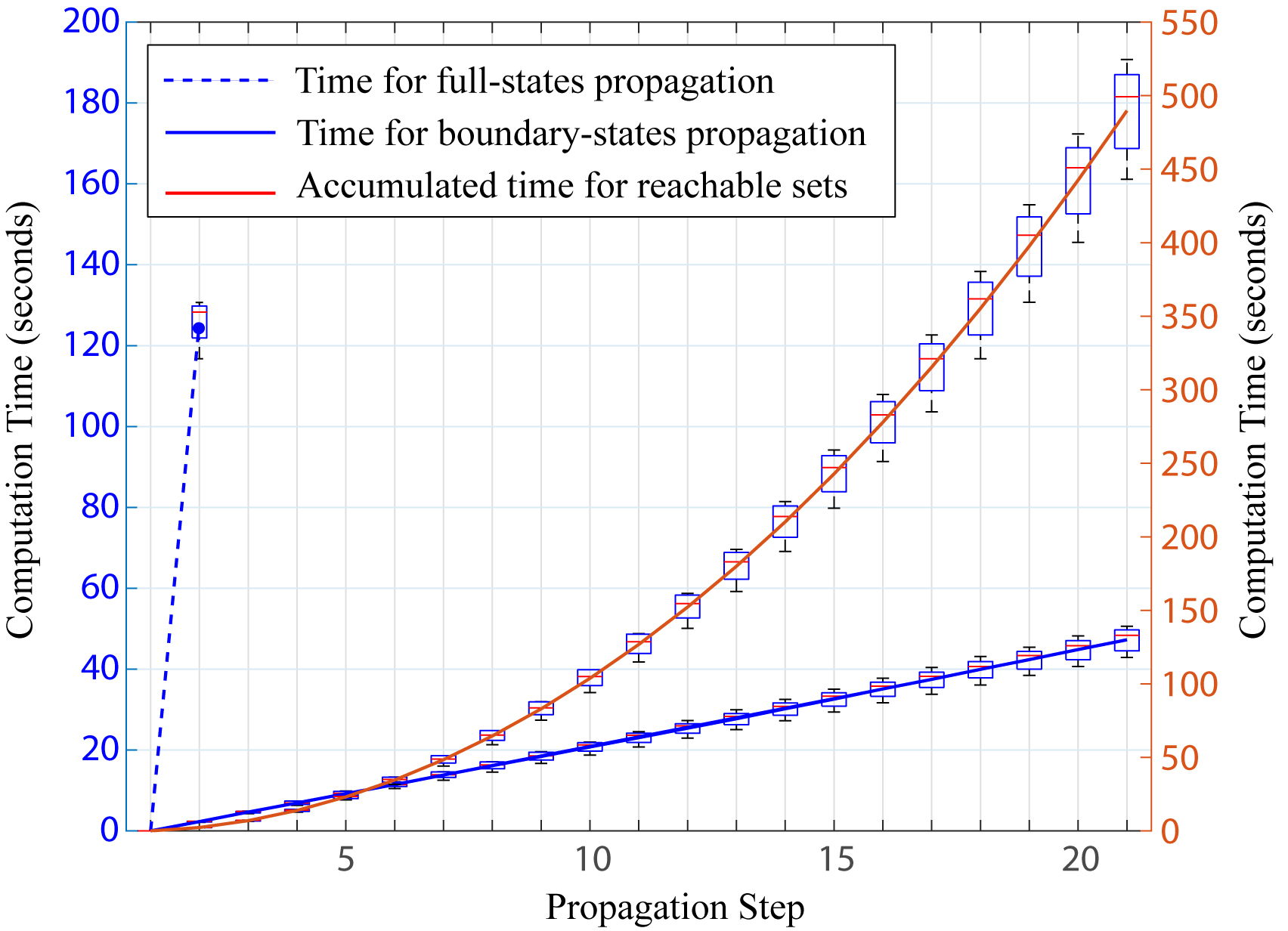}
\caption{Computational time for obtaining reachable sets with discretization step $\Delta t = 0.01$ and $10$ simulations.}
\label{fig:compute_reach}
\end{figure}

 \begin{figure*}
\centering
\begin{minipage}[t]{\linewidth}
\includegraphics[width=\linewidth]{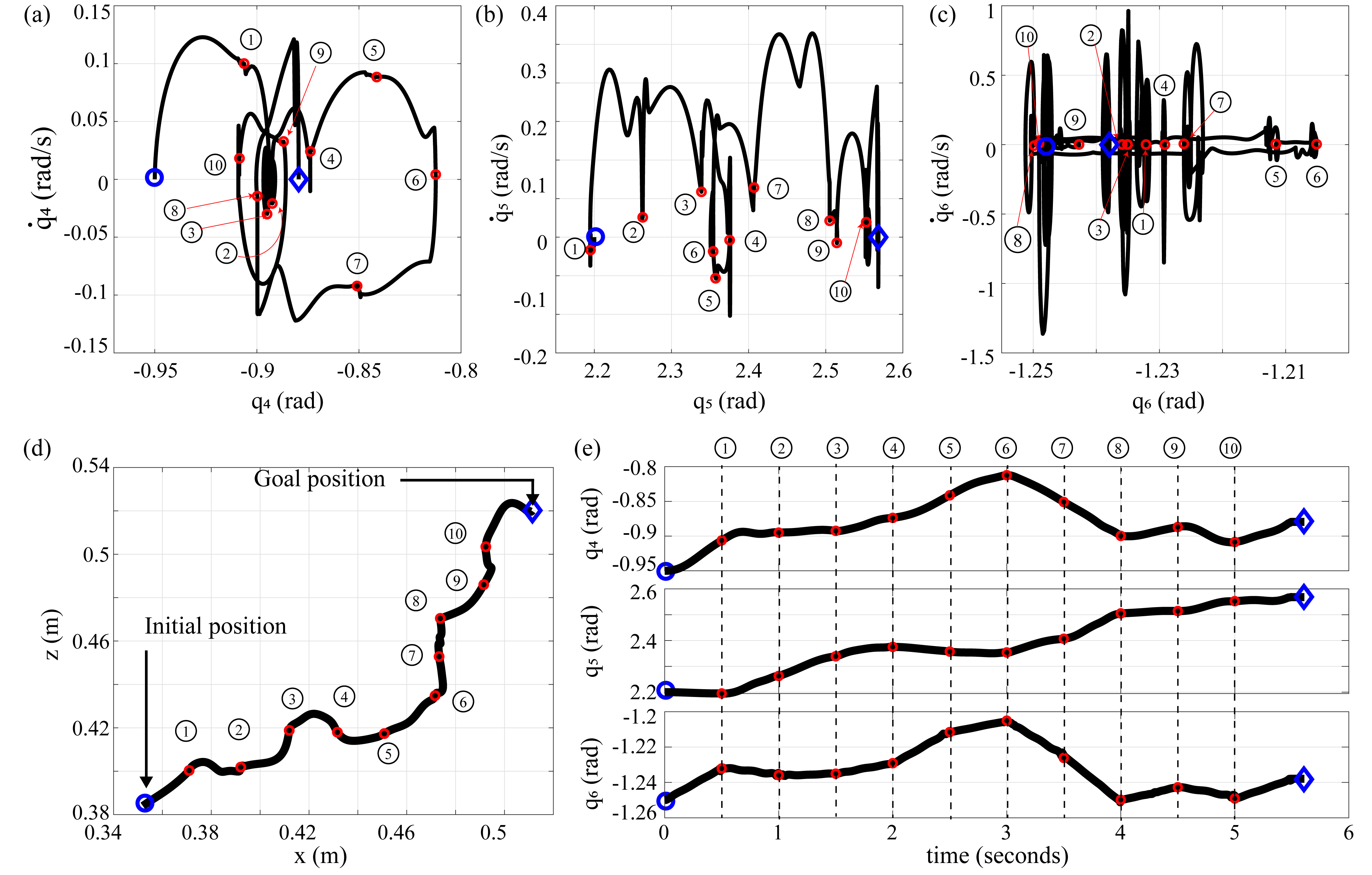}
\end{minipage}
\caption{Simulation results: (a), (b), and (c) illustrate the joint position and velocity trajectories of the actuated joints in the phase space, (d) shows the optimized trajectory in the output space, (e) shows the joint position trajectory of the actuated joints in the time domain. The blue circle and red diamond indicate the initial and final values, respectively. }
\label{Fig:sim_result}
\end{figure*}

The computational complexity is analyzed by measuring the execution time of the algorithm for computing the reachable sets. We repeat $10$ simulations to measure the computation time for both the full-state and boundary-state propagation methods and display the results in Fig. \ref{fig:compute_reach}. The algorithm cannot compute the reachable sets via the full-state propagation method for more than two steps due to the lack of memory capacity as shown in the blue dotted line in Fig \ref{fig:compute_reach}. As we predicted in the complexity analysis of Section \ref{sec_complexity}, the boundary-states propagation method significantly reduces the computational time for computing reachable sets.  

Fig. \ref{Fig:sim_result} shows the results of trajectory optimization to reach the goal position with respect to a given initial state. The optimization result includes both joint position and velocity trajectories fulfilling kinodynamic constraints, e.g. joint position, velocity, and torque limits and contact kinematic and force constraints. The optimization results for the actuated joints in the phase space have stabilizable end points which are marked as blue diamonds in Fig. \ref{Fig:sim_result}(a), (b), and (c). As shown in Fig \ref{Fig:sim_result}(d), the generated trajectories pass through the subregions in an optimized sequence obtained by solving the POMPD problem and reaching the final output goal position. 

\section{Conclusion}
This paper proposes a method to generate trajectories for complex robotic systems considering contact constraints. We utilize NLP to solve the optimal control problem. Our approach focuses on efficiently solving the NLP problem so that we can scale the method to many types of complex robotic systems. We devise a new approach to obtain discrete-time reachable sets for trajectory generation and solve the nonlinear optimization problem. Although the computational cost is significantly reduced, it is still challenging to employ this approach to real-time control. Therefore, in the near future, we will investigate ways to combine this approach with feedback controllers and extend the proposed method for hybrid dynamical systems, such as biped humanoid robots or dual arm manipulators. 

\begin{ack}                               
The authors would like to thank the members of the Human Centered Robotics Laboratory at The University of Texas at Austin for their great help and support. This work was supported by an NSF Grant\# 1724360 and partially supported by an ONR Grant\# N000141512507.
\end{ack}

\bibliographystyle{plain}        
\bibliography{autosam}           



\appendix

\end{document}